\newcommand{\mc}{\mathcal}
\newcommand{\mbf}{\mathbf}
\newcommand{\ouralgorithm}{\textsf{PPCA}}
\newcommand{\sulq}{\textsf{SULQ}}
\newcommand{\modsulq}{\textsf{MOD-SULQ}}
\newtheorem{theorem}{Theorem}
\newtheorem{definition}{Definition}
\newtheorem{lemma}[theorem]{Lemma}
\newcommand{\priveps}{\epsilon_p}
\def\qnorm{q_{\mathrm{F}}}
\def\qangle{q_{\mathrm{A}}}
\newcommand{\frobnorm}[1]{\norm{ #1 }_{\mathrm{F}}}
\def\tr{\mathop{\rm tr}\nolimits}       %
\newcommand{\kl}{\mathbf{KL}}
\newcommand{\kldiv}[2]{\kl\left( #1 \| #2 \right)}
\newcommand{\ip}[2]{ \langle #1, #2 \rangle }
\newcommand{\norm}[1]{\left\| #1 \right\|}
\newcommand{\R}{\mathbb{R}}
\newcommand{\Sp}{\mathbb{S}}
\newcommand{\expe}[1]{\mathbb{E}\left[ #1 \right]}
\newcommand{\prob}[1]{\mathbb{P}\left( #1 \right)}
\newcommand{\E}{\mathbb{E}}
\newcommand{\pr}{\mathbb{P}}
\newcommand{\calA}{\mathcal{A}}
\renewcommand\footnotemark{}
\title{Near-Optimal Algorithms for Differentially-Private Principal Components%
\thanks{KC is with the Department of Computer Science and Engineering, University of California, San Diego, \texttt{kchaudhuri@ucsd.edu}.
ADS is with the Toyota Technological Institute at Chicago, \texttt{asarwate@ttic.edu}.
KS is with the Department of Computer Science and Engineering, University of California, San Diego, \texttt{ksinha@cs.ucsd.edu}.
KC and KS would like to thank NIH for research support under U54-HL108460.
The experimental results were made possible by support from the UCSD
FWGrid Project, NSF Research Infrastructure Grant Number EIA-0303622.
ADS was supported in part by the California Institute for Telecommunications and
Information Technology (CALIT2) at UC San Diego.}}
\author{Kamalika Chaudhuri
\and
Anand D. Sarwate
\and
Kaushik Sinha}
\date{\today}
\begin{document}

\maketitle

\begin{abstract}
Principal components analysis (PCA) is a standard tool for identifying good low-dimensional approximations to data in high dimension.  Many data sets of interest contain private or sensitive information about individuals.  Algorithms which operate on such data should be sensitive to the privacy risks in publishing their outputs.  Differential privacy is a framework for developing tradeoffs between privacy and the utility of these outputs.  In this paper we investigate the theory and empirical performance of differentially private approximations to PCA and propose a new method which explicitly optimizes the utility of the output.   
We show that the sample complexity of the proposed method differs from the existing procedure in the scaling with the data dimension, and that our method is nearly optimal in terms of this scaling.
We furthermore illustrate our results, showing that on real data there is a large performance gap between the existing method and our method.
\end{abstract}

\section{Introduction}

Dimensionality reduction is a fundamental tool for understanding complex data sets that arise in contemporary machine learning and data mining applications.  Even though a single data point can be represented by hundreds or even thousands of features, the phenomena of interest are often intrinsically low-dimensional.  By reducing the ``extrinsic'' dimension of the data to its ``intrinsic'' dimension, analysts can discover important structural relationships between features, more efficiently use the transformed data for learning tasks such as classification or regression, and greatly reduce the space required to store the data.  One of the oldest and most classical methods for dimensionality reduction is principal components analysis (PCA), which computes a low-rank approximation to the second moment matrix $A$ of a set of points in $\mathbb{R}^d$.  The rank $k$ of the approximation is chosen to be the intrinsic dimension of the data.  We view this procedure as specifying a $k$-dimensional subspace of $\mathbb{R}^d$.

Much of today's machine-learning is performed on the vast amounts of personal information collected by private companies and government agencies about individuals: examples include user or customer behaviors, demographic surveys, and test results from experimental subjects or patients. These datasets contain sensitive information about individuals and typically involve a large number of features.  It is therefore important to design machine-learning algorithms which discover important structural relationships in the data while taking into account its sensitive nature.  We study approximations to PCA which guarantee differential privacy, a cryptographically motivated definition of privacy~\citep{DworkMNS:06sensitivity} that has gained significant attention over the past few years in the machine-learning and data-mining communities~\citep{MKAGV08,MM09,M09,FS10,MCFY11}.  Differential privacy measures privacy risk by a parameter $\priveps$ that bounds the log-likelihood ratio of output of a (private) algorithm under two databases differing in a single individual.

There are many general tools for providing differential privacy.  The sensitivity method due to \cite{DworkMNS:06sensitivity} computes the desired algorithm (in our case, PCA) on the data and then adds noise  proportional to the maximum change than can be induced by changing a single point in the data set.  The PCA algorithm is very sensitive in this sense because the top eigenvector can change by $90^{\circ}$ by changing one point in the data set.  Relaxations such as smoothed sensitivity~\citep{NRS07} are difficult to compute in this setting as well.  The SUb Linear Queries (\sulq) method of~\cite{BDMN05} adds noise to the second moment matrix and then runs PCA on the noisy matrix.  As our experiments show, the noise level required by \sulq\ may severely impact the quality of approximation, making it impractical for data sets of moderate size.

The goal of this paper is to characterize the problem of differentially private PCA.  We assume that the algorithm is given $n$ data points and a target dimension $k$ and must produce a $k$-dimensional subspace that approximates that produced by the standard PCA problem.  We propose a new algorithm, \ouralgorithm, which is an instance of the exponential mechanism of~\cite{MT07}.  Unlike \sulq, \ouralgorithm\ explicitly takes into account the quality of approximation---it outputs a $k$-dimensional subspace which is biased towards subspaces close to the output of PCA.  In our case, the method corresponds to sampling from the matrix Bingham distribution.  We implement \ouralgorithm\ using a Markov Chain Monte Carlo (MCMC) procedure due to~\cite{Hoff:09bingham}; simulations show that the subspace produced by \ouralgorithm\ captures more of the variance of $A$ than \sulq.  When the MCMC procedure converges, the algorithm provides differential privacy.

In order to understand the performance gap, we prove sample complexity bounds for the case of $k = 1$ for \sulq\ and \ouralgorithm, as well as a general lower bound on the sample complexity for any differentially private algorithm.   We show that the sample complexity scales as $\Omega( d^{3/2} \sqrt{\log d} )$ for \sulq\ and as $O( d )$ for \ouralgorithm.  Furthermore, we show that any differentially private algorithm requires $\Omega( d )$ samples.  Therefore \ouralgorithm\ is nearly optimal in terms of sample complexity as a function of data dimension.  These theoretical results suggest that our experiments demonstrate the limit of how well $\priveps$-differentially private algorithms can perform, and our experiments show that this gap should persist for general $k$.  The result seems pessimistic for many applications, because the sample complexity depends on the extrinsic dimension $d$ rather than the intrinsic dimension $k$.  However, we believe this is a consequence of the fact that we make minimal assumptions on the data; our results imply that, absent additional limitations on the data set, the sample complexity differentially private PCA must grow linearly with the data dimension.

There are several interesting open questions suggested by this work.  One set of issues is computational.  Differentially privacy is a mathematical definition, but algorithms must be implemented using finite precision machines.  Privacy and computation interact in many places, including pseudorandomness, numerical stability, optimization, and in the MCMC procedure we use to implement \ouralgorithm; investigating the impact of approximate sampling is an avenue for future work.  A second set of issues is theoretical---while the privacy guarantees of \ouralgorithm\ hold for all $k$, our theoretical analysis of sample complexity applies only to $k = 1$ in which the distance and angles between vectors are related.  An interesting direction is to develop theoretical bounds for general $k$; challenges here are providing the right notion of approximation of PCA, and extending the theory using packings of Grassmann or Stiefel manifolds.  Finally, in this work we assume $k$ is given to the algorithm, but in many applications $k$ is chosen after looking at the data.  Under differential privacy, the selection of $k$ itself must be done in a differentially private manner.

\subsection*{Related Work}

Differential privacy was first proposed by~\citet{DworkMNS:06sensitivity}.  There has been an extensive literature following this work in the computer science theory, machine learning, and databases communities.  A survey of some of the theoretical work can be found in the survey by~\citet{dwork2010differential}.  Differential privacy has been shown to have strong \textit{semantic} guarantees~\citep{DworkMNS:06sensitivity,KS08} and is resistant to many attacks~\citep{GKS08} that succeed against alternative definitions of privacy.  In particular, so-called syntactic definitions of privacy \citep{S02,MGKV06,LiLV10} may be susceptible to attacks based on side-information about individuals in the database.  

There are several general approaches to constructing differentially private approximations to some desired algorithm or computation. Input perturbation~\citep{BDMN05} adds noise to the data prior to performing the desired computation, whereas output perturbation~\citep{DworkMNS:06sensitivity} adds noise to the output of the desired computation.  The exponential mechanism~\citep{MT07} can be used to perform differentially private selection based on a score function that measures the quality of different outputs.  Objective perturbation~\citep{ChaudhuriMS:11erm} adds noise to the objective function for algorithms which are convex optimizations.  These approaches and related ideas such as~\citep{NRS07, DL09} have been used to approximate a variety of statistical, machine learning, and data mining tasks under differential privacy~\citep{BCDKMT07, WassermanZ:10framework,Smith:2011:PSE:1993636.1993743, MM09, WM10,ChaudhuriMS:11erm,RubensteinBHT,NRS07,BLR08,MT07,FS10,HR12}.

This paper deals with the problem of differentially private approximations to PCA.  Prior to our work, the only proposed method for PCA was the Sub-Linear Queries (\sulq) method of~\citet{BDMN05}.  This approach adds noise to the second moment matrix of the data before calculating the singular value decomposition.  By contrast, our algorithm, \ouralgorithm, uses the exponential mechanism~\citep{MT07} to choose a $k$-dimensional subspace biased toward those which capture more of ``energy'' of the matrix.  Subsequent to our work,~\citet{KT13} have proposed a dynamic programming algorithm for differentially private low rank matrix approximation which involves sampling from a distribution induced by the exponential mechanism. The running time of their algorithm is $O(d^6)$, where $d$ is the data dimension, and it is unclear how this may affect its implementation.  Hardt and Roth \citep{HR12,HardtR:13} have studied low-rank matrix approximation under additional incoherence assumptions on the data.  In particular, \citet{HR12} consider the problem of differentially-private low-rank matrix reconstruction for applications to sparse matrices; provided certain coherence conditions hold, they provide an algorithm for constructing a rank $2k$ approximation $B$ to a matrix $A$ such that $\| A - B\|_{\mathrm{F}}$ is $O(\|A - A_k\|)$ plus some additional terms which depend on $d$, $k$ and $n$; here $A_k$ is the best rank $k$ approximation to $A$. \citet{HardtR:13} show a method for guaranteeing differential privacy under an \textit{entry-wise} neighborhood condition using the power method for calculating singular values.  Because of the additional assumptions and different model, these results are generally incomparable to ours.

In addition to these works, other researchers have examined the interplay between projections and differential privacy.  \citet{ZhouLW:09compression} analyze a differentially private data release scheme where a random linear transformation is applied to data to preserve differential privacy, and then measures how much this transformation affects the utility of a PCA of the data.  One example of a random linear transformation is random projection, popularized by the Johnson-Lindenstrauss (JL) transform.  \citet{BlockiBDS:12} show that the JL transform of the data preserves differential privacy provided the minimum singular value of the data matrix is large.  \citet{kenthapadi2012privacy} study the problem of estimating the distance between data points with differential privacy using a random projection of the data points.

There has been significant work on other notions of privacy based on manipulating entries within the database~\citep{S02,MGKV06,LiLV10}, for example by reducing the resolution of certain features to create ambiguities. For more details on these and other alternative notions of privacy see~\citet{FungSurvey} for a survey with more references.  An alternative line of privacy-preserving data-mining work~\citep{ZM07} is in the Secure Multiparty Computation setting;  %
one work~\citep{4812517} studies privacy-preserving singular value decomposition in this model. Finally, dimension reduction through random projection has been considered as a technique for sanitizing data prior to publication~\citep{Liu06randomprojection-based}; our work differs from this line of work in that we offer differential privacy guarantees, and we only release the PCA subspace, not actual data.

\section{Preliminaries}

The data given to our algorithm is a set of $n$ vectors $\mc{D} = \{x_1, x_2,\ldots, x_n\}$ where each $x_i$ corresponds to the private value of one individual, $x_i \in \R^d$, and $\|x_i\|\le 1$ for all $i$. Let $X=[x_1,\ldots, x_n]$ be the matrix whose columns are the data vectors $\{x_i\}$. Let $A=\frac{1}{n}XX^T$ denote the $d\times d$ second moment matrix of the data. The matrix $A$ is positive semidefinite, and has Frobenius norm $\norm{ A }_F$ at most $1$.

The problem of dimensionality reduction is to find a ``good'' low-rank approximation to $A$.  A popular solution is to compute a rank-$k$ matrix $\hat{A}$ which minimizes the norm $\|A - \hat{A}\|_\mathrm{F}$, where $k$ is much lower than the data dimension $d$.  The Schmidt approximation theorem~\citep{Stewart:93svdhistory} shows that the minimizer is given by the singular value decomposition, also known as the PCA algorithm in some areas of computer science.  

\begin{definition}
Suppose $A$ is a positive semidefinite matrix whose first $k$ eigenvalues are distinct.  Let the eigenvalues of $A$ be $\lambda_1(A) \ge \lambda_2(A) \ge \cdots \ge \lambda_d(A) \ge 0$ and let $\Lambda$ be a diagonal matrix with $\Lambda_{ii} = \lambda_i(A)$.  The matrix $A$ decomposes as
	\begin{align}
	A = V \Lambda V^T,
	\label{eq:pca}
	\end{align}
where $V$ is an orthonormal matrix of eigenvectors.  The top-$k$ PCA subspace of $A$ is the matrix
	\begin{align}
	V_k(A) = \left[ v_1 \ v_2 \ \cdots \  v_k \right],
	\label{eq:topk}
	\end{align}
where $v_i$ is the $i$-th column of $V$ in \eqref{eq:pca}.  The $k$-th eigengap is $\Delta_k = \lambda_k - \lambda_{k+1}$.
\end{definition}

Given the top-$k$ subspace and the eigenvalue matrix $\Lambda$, we can form an approximation $A^{(k)} = V_k(A) \Lambda_k V_k(A)^T$ to $A$, where $\Lambda_k$ contains the $k$ largest eigenvalues in $\Lambda$.
In the special case $k = 1$ we have $A^{(1)} = \lambda_1(A) v_1 v_1^T$, where $v_1$ is the eigenvector corresponding to $\lambda_1(A)$.  We refer to $v_1$ as the \textit{top eigenvector} of the data, and $\Delta = \Delta_1$ is the eigengap.  For a $d \times k$ matrix $\hat{V}$ with orthonormal columns, the quality of $\hat{V}$ in approximating $V_k(A)$ can be measured by
	\begin{align}
	\qnorm(\hat{V}) = \tr\left( \hat{V}^T A \hat{V} \right). 
	\label{eq:util:energy}
	\end{align}
The $\hat{V}$ which maximizes $q(\hat{V})$ has columns equal to $\{v_i : i \in [k]\}$, corresponding to the top-$k$ eigenvectors of $A$.

Our theoretical results on the utility of our PCA approximation apply to the special case $k = 1$.  We prove results about the inner product between the output vector $\hat{v}_1$ and the true top eigenvector $v_1$:
	\begin{align}
	\qangle( \hat{v}_1 ) = \left| \ip{ \hat{v}_1 }{ v_1 } \right|.
	\label{eq:util:dist}
	\end{align}
The utility in \eqref{eq:util:dist} is related to \eqref{eq:util:energy}.  If we write $\hat{v}_1$ in the basis spanned by $\{v_i\}$, then
	\begin{align*}
	\qnorm( \hat{v}_1 ) = \lambda_1 \qangle( \hat{v}_1 )^2 + \sum_{i=2}^{d} \lambda_i \ip{ \hat{v}_1 }{ v_i }^2.
	\end{align*}
Our proof techniques use the geometric properties of $\qangle(\cdot)$.

\begin{definition}
A randomized algorithm $\mc{A}(\cdot)$ is an $(\rho,\eta)$-close approximation to the top eigenvector if for all data sets $\mc{D}$ of $n$ points we have
	\begin{align*}
	\prob{ \qangle( \mc{A}( \mc{D} ) ) \ge \rho } \ge 1 - \eta,
	\end{align*}
where the probability is taken over $\mc{A}(\cdot)$.
\end{definition}

We study approximations to $\mc{A}$ to PCA that preserve the privacy of the underlying data.  The notion of privacy that we use is differential privacy, which quantifies the privacy guaranteed by a randomized algorithm $\mc{A}$ applied to a data set $\mathcal{D}$.  %

\begin{definition}
An algorithm $\mc{A}(\mc{B})$ taking values in a set $\mc{T}$ provides $\priveps$-differential privacy if
\begin{align*}
\sup_{\mc{S}} \sup_{\mc{D}, \mc{D}'} \frac{ \mu\left( \mc{S} ~|~ \mc{B} = \mc{D} \right)
	}{
\mu\left( \mc{S} ~|~ \mc{B} = \mc{D}' \right)
	}	
	\le
	e^{\priveps},
\end{align*}
where the first supremum is over all measurable $\mc{S} \subseteq \mc{T}$, the second is
over all data sets $\mc{D}$ and $\mc{D}'$ differing in a single entry, and
$\mu(\cdot|\mc{B})$ is the conditional distribution (measure) on $\mc{T}$
induced by the output $\mc{A}(\mc{B})$ given a data set $\mc{B}$.  The ratio is interpreted to be 1 whenever the numerator and denominator are both 0.
\label{def:densitypriv}
\end{definition}

\begin{definition}
An algorithm $\mc{A}(\mc{B})$ taking values in a set $\mc{T}$ provides $(\priveps,\delta)$-differential privacy if
\begin{align*}
\prob{ \mc{A}( \mc{D} ) \in \mc{S} }
	\le
	e^{\priveps} \prob{ \mc{A}( \mc{D}' ) \in \mc{S} } + \delta,
\end{align*}
for all measurable $\mc{S} \subseteq \mc{T}$ and all data sets $\mc{D}$ and $\mc{D}'$ 
differing in a single entry.
\end{definition}

Here $\priveps$ and $\delta$ are privacy parameters, where low $\priveps$ and $\delta$ ensure more privacy~\citep{DworkMNS:06sensitivity,WassermanZ:10framework,DworkKMMN:06ourselves}.  The second privacy guarantee is weaker; the parameter $\delta$ bounds the probability of failure, and is typically chosen to be quite small.  In our experiments we chose small but constant $\delta$---\citet{GKS08} suggest $\delta < \frac{1}{n^2}$ is more appropriate. 

In this paper we are interested in proving results on the sample complexity of differentially private algorithms that approximate PCA.  That is, for a given $\priveps$ and $\rho$, how large must the number of individuals $n$ in the data set be such that the algorithm is both $\priveps$-differentially private and a $(\rho,\eta)$-close approximation to PCA?  It is well known that as the number of individuals $n$ grows, it is easier to guarantee the same level of privacy with relatively less noise or perturbation, and therefore the utility of the approximation also improves.  Our results characterize how the privacy $\priveps$ and utility $\rho$ scale with $n$ and the tradeoff between them for fixed $n$.  We show that the sample complexity depends on the eigengap $\Delta$.

\section{Algorithms and results}

In this section we describe differentially private techniques for approximating \eqref{eq:topk}.  The first is a modified version of the Sub-Linear Queries (\sulq) method~\citep{BDMN05}.  Our new algorithm for differentially-private PCA, \ouralgorithm, is an instantiation of the exponential mechanism due to~\cite{MT07}.  Both procedures are differentially private approximations to the top-$k$ subspace: \sulq\ guarantees $(\priveps,\delta)$-differential privacy and \ouralgorithm\ guarantees $\priveps$-differential privacy.

\subsection{ Input perturbation}

The only differentially-private approximation to PCA prior to this work is the \sulq\ method~\citep{BDMN05}. The \sulq\ method perturbs each entry of the empirical second moment matrix $A$ to ensure differential privacy and releases the top-$k$ eigenvectors of this perturbed matrix.  More specifically, \sulq\ recommends adding a matrix $N$ of i.i.d. Gaussian noise of variance $\frac{8 d^2 \log^2(d/\delta)}{n^2 \priveps^2}$ and applies the PCA algorithm to $A + N$.  This guarantees a weaker privacy definition known as $(\priveps,\delta)$-differential privacy.  One problem with this approach is that with probability 1 the matrix $A+N$ is not symmetric, so the largest eigenvalue may not be real and the entries of the corresponding eigenvector may be complex.  Thus the \textsf{SULQ} algorithm, as written, is not a good candidate for approximating PCA.  

It is easy to modify \sulq\ to produce a an eigenvector with real entries that guarantees $(\priveps,\delta)$ differential privacy.  In Algorithm \ref{alg:modified_sulq}, instead of adding an asymmetric Gaussian matrix, we add  a symmetric matrix with i.i.d. Gaussian entries $N$.  That is, for $1 \le i \le j \le d$, the variable $N_{ij}$ is an independent Gaussian random variable with variance $\beta^2$.  Note that this matrix is symmetric but not necessarily positive semidefinite, so some eigenvalues may be negative but the eigenvectors are all real.  A derivation for the noise variance in \eqref{eq:sulqsd} of Algorithm \ref{alg:modified_sulq} is given in Theorem \ref{thm:sulq:priv}.  An alternative is to add Laplace noise of an appropriate variance to each entry---this would guarantee $\priveps$-differential privacy.

\begin{algorithm*}
\SetKwInput{Input}{inputs}
\SetKwInput{Output}{outputs}
\caption{Algorithm \modsulq\ (input pertubation)}
\label{alg:modified_sulq}
\Input{$d\times n$ data matrix $X$, privacy parameter $\priveps$, parameter $\delta$}
\Output{ $d \times k$ matrix $\hat{V}_k = [\hat{v}_1\ \hat{v}_2\ \cdots \ \hat{v}_k ]$ with orthonormal columns }
Set $A=\frac{1}{n}XX^T$.\;
Set 
	\begin{align}
	\beta = \frac{d+1}{ n \priveps} \sqrt{ 2 \log \left( \frac{d^2 + d}{ \delta 2 \sqrt{2 \pi} } \right) } + \frac{1}{ n \sqrt{\priveps}}.
	\label{eq:sulqsd}
	\end{align}
Generate a $d\times d$ symmetric random matrix $N$ whose entries are i.i.d. drawn from $\mathcal{N}\left(0, \beta^2\right)$. \; 
Compute $\hat{V}_k = V_k(A+N)$ according to \eqref{eq:topk}. \;
\end{algorithm*}

\subsection{Exponential mechanism}

Our new method, \ouralgorithm, randomly samples a $k$-dimensional subspace from a distribution that ensures differential privacy and is biased towards high utility.  The distribution from which our released subspace is sampled is known in the statistics literature as the matrix Bingham distribution~\citep{Chikuse:03special},
which we denote by $\mathsf{BMF}_k(B)$.  The algorithm and its privacy properties apply to general $k < d$ but our theoretical results on the utility focus on the special case $k = 1$.  The matrix Bingham distribution takes values on the set of all $k$-dimensional subspaces of $\mathbb{R}^d$ and has a density equal to 
	\begin{align}
	f(V) =  \frac{1}{ \tensor*[_1]{F}{_1}\left(\frac{1}{2} k, \frac{1}{2} d, B\right) } \exp( \tr( V^T B V ) ),
	\label{eq:bmf:density}
	\end{align}
where $V$ is a $d \times k$ matrix whose columns are orthonormal and $\tensor*[_1]{F}{_1}\left(\frac{1}{2} k, \frac{1}{2} d, B\right)$ is a confluent hypergeometric function~\citep[p.33]{Chikuse:03special}.  %

\begin{algorithm*}
\SetKwInput{Input}{inputs}
\SetKwInput{Output}{outputs}
\caption{Algorithm \ouralgorithm\ (exponential mechanism)}
\label{alg:exp_mech}
\Input{ $d\times n$ data matrix $X$, privacy parameter $\priveps$, dimension $k$ }
\Output{ $d \times k$ matrix $\hat{V}_k = [\hat{v}_1\ \hat{v}_2\ \cdots \ \hat{v}_k ]$ with orthonormal columns }
Set $A=\frac{1}{n}XX^T$ \;
Sample $\hat{V}_k = \mathsf{BMF}\left(n \frac{\priveps}{2} A \right)$ \;
\end{algorithm*}

By combining results on the exponential mechanism along with properties of PCA algorithm, we can show that this procedure is differentially private.   In many cases, sampling from the distribution specified by the exponential mechanism may be expensive computationally, especially for continuous-valued outputs.  We implement \ouralgorithm\ using a recently-proposed Gibbs sampler due to~\cite{Hoff:09bingham}.  Gibbs sampling is a popular Markov Chain Monte Carlo (MCMC) technique in which samples are generated according to a Markov chain whose stationary distribution is the density in \eqref{eq:bmf:density}.  Assessing the ``burn-in time'' and other factors for this procedure is an interesting question in its own right; further details are in Section \ref{sec:gibbs}.

\subsection{Other approaches}

There are other general algorithmic strategies for guaranteeing differential privacy.  The sensitivity method~\citep{DworkMNS:06sensitivity} adds noise proportional to the maximum change that can be induced by changing a single point in the data set.  Consider a data set $\mc{D}$ with $m+1$ copies of a unit vector $u$ and $m$ copies of a unit vector $u'$ with $u \perp u'$ and let $\mc{D}'$ have $m$ copies of $u$ and $m+1$ copies of $u'$.  Then $v_1(\mc{D}) = u$ but $v_1( \mc{D}' ) = u'$, so $\norm{ v_1(\mc{D})  - v_1(\mc{D}') } = \sqrt{2}$.  Thus the global sensitivity does not scale with the number of data points, so as $n$ increases the variance of the noise required by the sensitivity method will not decrease.   An alternative to global sensitivity is smooth sensitivity~\citep{NRS07}.  Except for special cases, such as the sample median, smooth sensitivity is difficult to compute for general functions. A third method for computing private, approximate solutions to high-dimensional optimization problems is objective perturbation~\citep{ChaudhuriMS:11erm}; to apply this method, we require the optimization problems to have certain properties (namely, strong convexity and bounded norms of gradients), which do not apply to PCA.

\subsection{Main results}

Our theoretical results are sample complexity bounds for \ouralgorithm\ and \modsulq\ as well as a general lower bound on the sample complexity for any $\priveps$-differentially private algorithm.  These results show that the \ouralgorithm\ is nearly optimal in terms of the scaling of the sample complexity with respect to the data dimension $d$, privacy parameter $\priveps$, and eigengap $\Delta$.  We further show that \modsulq\ requires more samples as a function of $d$, despite having a slightly weaker privacy guarantee.  Proofs are presented in Sections~\ref{sec:ppcaanalysis} and~\ref{sec:sulqanalysis}.

Even though both algorithms can output the top-$k$ PCA subspace for general $k \le d$, we prove results for the case $k = 1$.  %
 Finding the scaling behavior of the sample complexity with $k$ is an interesting open problem that we leave for future work; challenges here are finding the right notion of approximation of the PCA, and extending the theory using packings of Grassman or Stiefel manifolds.

\begin{theorem}
\label{thm:sulq:priv}
For the $\beta$ in \eqref{eq:sulqsd}
Algorithm \modsulq\ is $(\priveps,\delta)$ differentially private.
\end{theorem}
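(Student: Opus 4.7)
The plan is to bound the likelihood ratio of the noisy output $Y = A+N$ under two neighboring datasets $\mc{D}, \mc{D}'$ and show that, on an event of probability at least $1-\delta$ over the Gaussian noise $N$, this ratio is at most $e^{\priveps}$. First I would establish two sensitivity bounds. If $\mc{D}'$ is obtained from $\mc{D}$ by replacing $x_k$ with $x'_k$ (both of norm at most $1$), then $A - A' = \frac{1}{n}(x_k x_k^T - x'_k {x'_k}^T)$; the identity $\|xx^T - yy^T\|_F^2 = \|x\|^4 + \|y\|^4 - 2\langle x,y\rangle^2$ gives the Frobenius bound $\|A-A'\|_F \le \sqrt{2}/n$. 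For an $\ell_1$-type bound over the $m := d(d+1)/2$ distinct entries of $A-A'$, I would use the elementary identity $\sum_{i\le j}|v_iv_j| = \tfrac{1}{2}(\|v\|_1^2 + \|v\|_2^2)$, applied to each rank-one term $x_kx_k^T$ and $x'_k {x'_k}^T$, together with $\|v\|_1 \le \sqrt{d}\,\|v\|_2 \le \sqrt{d}$, to obtain $\sum_{i\le j}|A_{ij}-A'_{ij}| \le (d+1)/n$. This is where the $(d+1)$ factor in \eqref{eq:sulqsd} originates.

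Next I would decompose the log-likelihood ratio across the $m$ independent Gaussian entries of $N$:
\begin{align*}
\log \frac{P(Y \mid \mc{D})}{P(Y \mid \mc{D}')}
	= \frac{1}{2\beta^2}\sum_{i \le j}(A_{ij} - A'_{ij})^2
	- \frac{1}{\beta^2}\sum_{i \le j}(A_{ij} - A'_{ij})\,N_{ij}.
\end{align*}
The first summand is a deterministic bias bounded by $\|A-A'\|_F^2/(2\beta^2) \le 1/(n\beta)^2$. For the stochastic summand, I would apply the Mills-ratio bound $\Pr[Z \ge t] \le e^{-t^2/2}/(t\sqrt{2\pi})$ for $Z \sim \mc{N}(0,1)$ and union-bound over the $2m = d^2+d$ signed one-sided events, yielding that with probability at least $1-\delta$ we have $|N_{ij}| \le t^{\ast}\beta$ for every $i \le j$, where $t^{\ast} := \sqrt{2\ln\bigl((d^2+d)/(2\sqrt{2\pi}\,\delta)\bigr)}$; this is exactly the quantity inside the square root in \eqref{eq:sulqsd}. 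Conditioning on this good event, H\"older's inequality bounds the stochastic term in absolute value by $\frac{t^{\ast}}{\beta}\sum_{i\le j}|A_{ij}-A'_{ij}| \le \frac{t^{\ast}(d+1)}{n\beta}$.

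The main obstacle, and the reason the prescribed $\beta$ in \eqref{eq:sulqsd} is a \emph{sum} of two pieces, is that the deterministic bias and the linear-in-noise contribution scale differently in $\beta$ (as $1/\beta^2$ and $1/\beta$, respectively). I would therefore split the privacy budget and absorb each contribution into its own summand: the stochastic piece is covered by $\beta_1 = \frac{d+1}{n\priveps}\sqrt{2\ln((d^2+d)/(2\sqrt{2\pi}\,\delta))}$, for which $\frac{t^{\ast}(d+1)}{n\beta_1} = \priveps$, and the bias piece by $\beta_2 = 1/(n\sqrt{\priveps})$, for which $1/(n\beta_2)^2 = \priveps$. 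Because $\beta = \beta_1 + \beta_2$ dominates each of $\beta_1$ and $\beta_2$, both inequalities hold simultaneously, so on the good event the total log-ratio is a constant multiple of $\priveps$; rebalancing the constants (allocating $\priveps/2$ per summand, and absorbing the resulting $\sqrt{2}$ factors) removes this constant and yields exactly the $(\priveps,\delta)$-differential privacy condition with $\beta$ as in \eqref{eq:sulqsd}.
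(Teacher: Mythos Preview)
Your approach is essentially the paper's: both decompose the log-likelihood ratio into a quadratic bias term and a linear-in-noise term, bound the $\ell_1$ sensitivity $\sum_{i\le j}|A_{ij}-A'_{ij}|$ by $(d+1)/n$, and control the noise entrywise via a union bound over the $d^2+d$ one-sided Gaussian tails at threshold $\gamma=\beta t^{\ast}$. Your derivation of the $(d+1)/n$ bound via $\sum_{i\le j}|v_iv_j|=\tfrac12(\|v\|_1^2+\|v\|_2^2)$ is in fact tidier than the paper's.

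The only point where you diverge is the final algebra, and your last paragraph is imprecise. After conditioning on the good event, both arguments reach
\[
\log\frac{P(Y\mid\mc{D})}{P(Y\mid\mc{D}')}\;\le\;\frac{(d+1)t^{\ast}}{n\beta}+\frac{1}{n^2\beta^2}.
\]
The paper sets this equal to $\priveps$, solves the quadratic in $\beta$, and upper-bounds the positive root via $\sqrt{a+b}\le\sqrt a+\sqrt b$, obtaining exactly $\beta=\beta_1+\beta_2$ as in \eqref{eq:sulqsd}. Your ``split the budget and rebalance'' description, as written, only yields $2\priveps$ (you bound each summand by $\priveps$ separately), and the claimed rebalancing to remove the factor of $2$ is not carried out. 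In fact no rebalancing is needed: writing $r=\beta_1/(\beta_1+\beta_2)\in[0,1]$, the right-hand side above equals $\priveps\bigl(r+(1-r)^2\bigr)\le\priveps$, so your $\beta=\beta_1+\beta_2$ already delivers $(\priveps,\delta)$-DP. Replace the hand-wave with either this one-line inequality or the paper's quadratic solve.
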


\begin{theorem}
\label{thm:pppca:privacy}
Algorithm \ouralgorithm\ is $\priveps$-differentially private.
\end{theorem}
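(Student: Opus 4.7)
The plan is to recognize Algorithm \ouralgorithm\ as an instance of the McSherry--Talwar exponential mechanism and then reduce the privacy claim to a standard sensitivity calculation for the score function $q(V, \mc{D}) = \tr(V^T X X^T V)$. Substituting $B = n \tfrac{\priveps}{2} A = \tfrac{\priveps}{2} X X^T$ into the Bingham density \eqref{eq:bmf:density} gives
\[
f(V \mid \mc{D}) = \frac{ \exp\bigl( \tfrac{\priveps}{2} \tr(V^T X X^T V) \bigr) }{ \tensor*[_1]{F}{_1}\bigl( \tfrac{1}{2} k, \tfrac{1}{2} d,\, \tfrac{\priveps}{2} X X^T \bigr) }
= \frac{ \exp\bigl( \tfrac{\priveps}{2} q(V, \mc{D}) \bigr) }{ Z(\mc{D}) },
\]
where $V$ ranges over the Stiefel manifold of $d \times k$ orthonormal matrices and $Z(\mc{D})$ is the induced normalizer. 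This is precisely the exponential mechanism at privacy level $\priveps$ provided the sensitivity of $q$ is at most $1$.

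The second step is to verify that sensitivity bound. If $\mc{D}$ and $\mc{D}'$ differ only in their $j$-th record, say $x_j$ versus $x_j'$, then $X X^T - X' (X')^T = x_j x_j^T - x_j' (x_j')^T$, so
\[
q(V, \mc{D}) - q(V, \mc{D}') = \norm{V^T x_j}^2 - \norm{V^T x_j'}^2.
\]
Because the columns of $V$ are orthonormal, $\norm{V^T x_j} \le \norm{x_j} \le 1$, and similarly for $x_j'$. Each of the two squared norms therefore lies in $[0,1]$, and I would conclude $\bigl| q(V, \mc{D}) - q(V, \mc{D}') \bigr| \le 1$ uniformly over $V$ and over neighboring datasets.

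The third step is the standard exponential-mechanism calculation, adapted to the continuous output space. For any measurable $\mc{S}$ on the Stiefel manifold, the ratio $\Pr[\hat{V}_k \in \mc{S} \mid \mc{D}] / \Pr[\hat{V}_k \in \mc{S} \mid \mc{D}']$ factors as a pointwise density ratio times a normalizer ratio $Z(\mc{D}') / Z(\mc{D})$; each factor is bounded by $\exp(\priveps/2)$ using the sensitivity bound from the previous step (the second factor follows by integrating the first against the reference density on $\mc{D}'$). Multiplying gives the required $e^{\priveps}$ bound in Definition \ref{def:densitypriv}.

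The substantive subtleties, rather than genuine obstacles, are to confirm that the sensitivity bound really does hold \emph{uniformly} over $V$ (which it does because the upper bound $\norm{x_j}^2 \le 1$ holds before taking any supremum over $V$), and to carry the exponential-mechanism argument through for a density on a continuous manifold rather than a discrete range; the latter is routine since the same density/normalizer factorization works, and the finite hypergeometric normalizer $\tensor*[_1]{F}{_1}$ is strictly positive and finite for every realizable parameter matrix, so all ratios are well-defined.
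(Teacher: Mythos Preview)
Your proposal is correct and takes essentially the same approach as the paper: identify \ouralgorithm\ as the exponential mechanism with score $\tr(V^T X X^T V)$, bound the sensitivity by $1$ using $\norm{V^T x_j} \le \norm{x_j} \le 1$, and invoke the standard exponential-mechanism guarantee. The paper's proof is terser---it writes the sensitivity bound for $k=1$ and then cites \citet[Theorem 6]{MT07} directly rather than spelling out the density/normalizer ratio argument---but the substance is identical.
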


The fact that these two algorithms are differentially private follows from some simple calculations.
Our first sample complexity result provides an upper bound on the number of samples required by \ouralgorithm\ to guarantee a certain level of privacy and accuracy.  The sample complexity of \ouralgorithm\ grows linearly with the dimension $d$, inversely with $\priveps$, and inversely with the correlation gap $(1 - \rho)$ and eigenvalue gap $\Delta$. These sample complexity results hold for $k=1$.

\begin{theorem}[Sample complexity of \ouralgorithm]
If
	\begin{align*}	
	n > \frac{d}{\priveps \Delta ( 1 - \rho )  } 
		\left( 4 \frac{\log(1/\eta)}{d} 
			+ 2 \log \frac{8 \lambda_1}{(1 - \rho^2) \Delta} 
			\right),
	\end{align*}
then the top PCA direction $v_1$ and the output of \ouralgorithm\ $\hat{v}_1$ with privacy parameter $\priveps$ satisfy
	\begin{align*} 
	\Pr( \left|\ip{v_1}{\hat{v}_1} \right| > \rho) \geq 1 - \eta.
	\end{align*}
That is, 
\ouralgorithm\ is a $(\rho,\eta)$-close approximation to PCA.
\label{thm:upperexp}
\end{theorem}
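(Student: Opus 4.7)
The plan is to analyze the exponential mechanism directly. When $k=1$, \ouralgorithm\ samples $\hat{v}_1$ from the unit sphere $\Sp^{d-1}$ according to the density proportional to $\exp\!\bigl( \tfrac{n \priveps}{2} v^T A v \bigr)$ with respect to the uniform surface measure $\sigma$. Let $U = \{ v \in \Sp^{d-1} : |\ip{v}{v_1}| \ge \rho \}$ be the ``good'' set; then
\begin{align*}
\pr\bigl( \hat v_1 \notin U \bigr)
=
\frac{ \int_{U^c} \exp\!\bigl( \tfrac{n\priveps}{2} v^T A v \bigr)\, d\sigma(v) }
     { \int_{\Sp^{d-1}} \exp\!\bigl( \tfrac{n\priveps}{2} v^T A v \bigr)\, d\sigma(v) },
\end{align*}
so the task reduces to upper-bounding the numerator and lower-bounding the denominator.

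For the numerator, I would expand $v = \sum_i \alpha_i v_i$ in the eigenbasis of $A$. Since $\sum_i \alpha_i^2 = 1$ and $|\alpha_1| \le \rho$ on $U^c$, the bound $v^T A v = \sum_i \lambda_i \alpha_i^2 \le \lambda_1 \rho^2 + \lambda_2(1-\rho^2) = \lambda_1 - \Delta(1-\rho^2)$ holds uniformly on $U^c$, giving
\begin{align*}
\int_{U^c} \exp\!\bigl( \tfrac{n\priveps}{2} v^T A v \bigr)\, d\sigma(v)
\le \sigma(\Sp^{d-1}) \cdot \exp\!\Bigl( \tfrac{n\priveps}{2}\bigl(\lambda_1 - \Delta(1-\rho^2)\bigr) \Bigr).
\end{align*}
For the denominator, I would restrict the integral to a small cap $C_\tau = \{v : \ip{v}{v_1} \ge \tau\}$ around $v_1$. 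On this cap a similar eigenbasis expansion gives $v^T A v \ge \lambda_1 \tau^2 + \lambda_d(1-\tau^2) \ge \lambda_1 - \lambda_1(1-\tau^2)$. Combined with a standard spherical-cap volume lower bound of the form $\sigma(C_\tau)/\sigma(\Sp^{d-1}) \ge c (1 - \tau^2)^{(d-1)/2}$, this yields
\begin{align*}
\int_{\Sp^{d-1}} \exp\!\bigl( \tfrac{n\priveps}{2} v^T A v \bigr)\, d\sigma(v)
\ge c \, \sigma(\Sp^{d-1})\, (1-\tau^2)^{(d-1)/2} \exp\!\Bigl( \tfrac{n\priveps}{2}\bigl(\lambda_1 - \lambda_1(1-\tau^2)\bigr) \Bigr).
\end{align*}

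Taking the ratio and cancelling $\sigma(\Sp^{d-1})$ and the $\lambda_1$ factors in the exponents gives
\begin{align*}
\pr\bigl( \hat v_1 \notin U \bigr)
\le c^{-1} (1-\tau^2)^{-(d-1)/2} \exp\!\Bigl( \tfrac{n\priveps}{2}\bigl( \lambda_1(1-\tau^2) - \Delta(1-\rho^2) \bigr) \Bigr).
\end{align*}
I would then choose $\zeta := 1-\tau^2$ to balance the polynomial pre-factor against the exponential. Setting $\zeta$ so that $\lambda_1 \zeta$ is comparable to $\Delta(1-\rho^2)/2$, say $\zeta = \Delta(1-\rho^2)/(8\lambda_1)$, makes the two exponential terms differ by $\tfrac{n\priveps \Delta(1-\rho^2)}{4}$ while the log of the volume pre-factor is $\tfrac{d-1}{2}\log(8\lambda_1/(\Delta(1-\rho^2)))$. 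Requiring the total to drop below $\eta$ and using $(1-\rho^2) \ge (1-\rho)$ then gives the stated lower bound on $n$.

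The main obstacle I expect is the spherical-cap volume estimate: getting a clean lower bound on $\sigma(C_\tau)/\sigma(\Sp^{d-1})$ with the right $(1-\tau^2)^{(d-1)/2}$ scaling (as opposed to a looser bound that would degrade the $d$-dependence) and then picking $\tau$ to optimize the tradeoff between volume loss and energy gain so that the constants line up with those in the theorem statement. Once the cap bound is in hand, the remaining work is routine algebra and checking that the choice of $\zeta$ makes the residual exponent dominate the pre-factor under the stated hypothesis on $n$.
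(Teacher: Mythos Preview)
Your proposal is correct and follows essentially the same route as the paper: bound the numerator by the maximal energy $\lambda_1-\Delta(1-\rho^2)$ on the bad set, lower-bound the denominator by restricting to a cap $\mc{U}_\sigma$ around $v_1$ and invoking a spherical-cap volume lower bound (the paper cites Ball's inequality, giving $\mu(\mc{U}_\sigma)\ge \tfrac12\exp\bigl(-\tfrac{d-1}{2}\log\tfrac{2}{1-\sigma}\bigr)$), and then choose $1-\sigma^2$ proportional to $(1-\rho^2)\Delta/\lambda_1$ to balance the terms. The paper's choice is $1-\sigma^2=\tfrac12(1-\rho^2)\Delta/\lambda_1$, versus your $\zeta=\tfrac18(1-\rho^2)\Delta/\lambda_1$, but both yield the stated bound after the same routine algebra.
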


Our second result shows a lower bound on the number of samples required by {\em{any}} $\priveps$-differentially-private algorithm to guarantee a certain level of accuracy for a large class of datasets, and uses proof techniques in~\cite{CH11, CH12}. 

\begin{theorem}[Sample complexity lower bound]
Fix $d \ge 3$, $\priveps$, $\Delta \leq \frac{1}{2}$ and let
	\begin{align*}
	1 - \phi = \exp\left( -2 \cdot \frac{\ln 8+ \ln(1 + \exp(d)) }{ d - 2 } \right).
	\end{align*}
For any $\rho \geq 1 - \frac{1 - \phi}{16}$, no $\priveps$-differentially private algorithm $\mc{A}$ can approximate PCA with expected utility greater than $\rho$ on all databases with $n$ points in dimension $d$ having eigenvalue gap $\Delta$, where
	\begin{align*}
	n < \frac{ d }{ \priveps \Delta } \max \left\{ 
		1, 
		\sqrt{\frac{ 1 - \phi }{80 (1 - \rho) }}
		\right\}.
	\end{align*}
\label{thm:lowerexp}
\end{theorem}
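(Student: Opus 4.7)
The plan is to construct a large family of ``hard'' databases indexed by a sphere packing, and then to invoke the group-privacy consequence of $\priveps$-differential privacy to rule out any single algorithm that simultaneously meets the utility requirement on all of them. I would begin by applying a standard volume argument on $\Sp^{d-1}$ to produce a set $\{u_1,\ldots,u_M\}$ of unit vectors with $|\ip{u_i}{u_j}| \le \phi$ for all $i \ne j$: the exponential formula for $1-\phi$ in the statement is calibrated so that such a packing exists with $M$ of order $8(1 + e^d)$, and in particular $\log M = \Omega(d)$.

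Next, for each $u_i$ I would construct a database $\mc{D}_i$ of $n$ unit vectors whose empirical second moment has $u_i$ as top eigenvector with eigengap exactly $\Delta$. A concrete realization uses $k = \lceil n(1 + \Delta(d-1))/d \rceil$ copies of $u_i$ together with $n - k$ filler points spread evenly over an orthonormal basis for $u_i^{\perp}$, yielding second moment $\frac{k}{n} u_i u_i^T + \frac{n-k}{n(d-1)}(I - u_i u_i^T)$, so that the top eigenvalue is $k/n$, the remaining eigenvalues are all equal to $(n-k)/(n(d-1))$, and their difference is $\Delta$. Any two databases $\mc{D}_i$ and $\mc{D}_j$ in this family differ in at most $2k = O(n \Delta + n/d)$ points, so the group-privacy consequence of $\priveps$-differential privacy yields
\begin{equation*}
\Pr(\mc{A}(\mc{D}_1) \in S) \ge e^{-2k\priveps}\, \Pr(\mc{A}(\mc{D}_i) \in S)
\end{equation*}
for every measurable $S$ and every $i$.

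For the utility step, define the ``good'' event $S_i = \{v \in \Sp^{d-1} : |\ip{v}{u_i}| \ge \rho\}$. A short geometric calculation shows that any $v \in S_i \cap S_j$ forces $|\ip{u_i}{u_j}|$ to exceed a threshold $\tau(\rho)$, and the hypothesis $\rho \ge 1 - (1-\phi)/16$ is chosen precisely so that $\tau(\rho) > \phi$; consequently the $S_i$ are pairwise disjoint. Since $\mc{A}$ is assumed to attain expected utility above $\rho$ on each $\mc{D}_i$, a Markov inequality applied to the nonnegative deficit $1 - \qangle(\mc{A}(\mc{D}_i))$ delivers $\Pr(\mc{A}(\mc{D}_i) \in S_i) \ge p(\rho)$ for an explicit $p$. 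Summing the disjoint events and anchoring at $\mc{D}_1$,
\begin{equation*}
1 \ge \sum_{i=1}^{M} \Pr(\mc{A}(\mc{D}_1) \in S_i) \ge M \cdot p(\rho) \cdot e^{-2k\priveps},
\end{equation*}
which rearranges to $2k\priveps \ge \log(M p(\rho))$. Substituting $\log M = \Omega(d)$ and $k = \Theta(n\Delta + n/d)$ yields $n = \Omega(d/(\priveps \Delta))$, which is the first branch of the max.

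The second branch $\sqrt{(1-\phi)/(80(1-\rho))}$ would come from tracking how $p(\rho)$ shrinks when $\rho$ is pushed well above the disjointness threshold: a more careful Markov step shows that $p(\rho)$ degrades roughly like $(1-\phi)/(1-\rho)$, and this factor passes through the logarithm in the final rearrangement as a square root once the constants are optimized. The main obstacle I foresee is calibrating the spherical-cap disjointness and the Markov step sharply enough to recover the precise constants $16$ and $80$, rather than weaker bounds of the same shape; this reduces to a trigonometric estimate on $\Sp^{d-1}$ combined with a careful choice of the nonnegative functional fed to Markov's inequality. A secondary nuisance is the integrality of $k$ in the database construction, which costs only additive constants in the eigengap calculation and is absorbed into the asymptotic scaling.
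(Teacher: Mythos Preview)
Your packing-plus-group-privacy skeleton is the right one and is essentially the content of the paper's averaging lemma, so the first branch $n \gtrsim d/(\priveps\Delta)$ is within reach. The second branch, however, cannot be obtained by the mechanism you describe. In your construction the top eigenvector of $\mc{D}_i$ is the packing vector $u_i$ itself, so $\max_{i\ne j}|\ip{u_i}{u_j}|=\phi$ is a fixed constant independent of $n$. Your final inequality has the form $2k\priveps \ge \log M + \log p(\rho)$; since $p(\rho)\le 1$ the second term is nonpositive, and even your optimistic estimate that $p(\rho)$ is of order $(1-\phi)/(1-\rho)$ would enter only \emph{logarithmically}. There is no route by which $1/(1-\rho)$ appears polynomially here, and the assertion that ``this factor passes through the logarithm as a square root once the constants are optimized'' is precisely the missing step.

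The paper gets the square root from a different construction in the regime $\beta:=d/(n\priveps)\le\Delta$: all databases share $n(1-\beta)$ copies of a common anchor $\sqrt{m}\,y$, and $\mc{D}_i$ contains $n\beta$ copies of $(y+w_i)/\sqrt{2}$ with $\{w_i\}$ a packing of $\Sp^{d-2}$ inside $y^\perp$. Any two databases then differ in exactly $n\beta=\ln(K-1)/\priveps$ points, and the top eigenvector of $\mc{D}_i$ is a small tilt of $y$ toward $w_i$; a $2\times 2$ eigenvector computation gives $1-\max_{i\ne j}|\ip{u_i}{u_j}|\ge (1-\phi)\beta^2/(5\Delta^2)$, which scales like $1/n^2$. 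Feeding this into the averaging lemma yields $1-\rho\gtrsim(1-\phi)d^2/(n^2\priveps^2\Delta^2)$, and inverting produces the square root. The idea you are missing is to make the \emph{eigenvector separation}, not merely the number of differing records, shrink with $n$. Two smaller points also need attention: your filler lies in $u_i^\perp$ and hence differs across $i$, so ``differ in at most $2k$ points'' is unjustified as stated (use a common filler, as the paper does with zeros or the shared anchor $y$); and your Markov step from $\E[\qangle]>\rho$ to $\Pr(\qangle\ge\rho)\ge p(\rho)$ degenerates when $\E[\qangle]$ is only barely above $\rho$---the paper's lemma instead lower-bounds $\sum_i\Pr(\mc{A}(\mc{D}_i)\notin G_i)$ first and then passes to expected utility, avoiding that loss.
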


Theorem \ref{thm:upperexp} shows that if $n$ scales like $\frac{d}{ \priveps \Delta (1 - \rho) } \log \frac{1}{1 - \rho^2}$ then \ouralgorithm\ produces an approximation $\hat{v}_1$ that has correlation $\rho$ with $v_1$, whereas Theorem \ref{thm:lowerexp} shows that $n$ must scale like $\frac{d}{\priveps \Delta \sqrt{(1 - \rho)}}$ for any $\priveps$-differentially private algorithm.  In terms of scaling with $d$, $\priveps$ and $\Delta$, the upper and lower bounds match, and they also match up to square-root factors with respect to the correlation.  By contrast, the following lower bound on the number of samples required by \modsulq\ to ensure a certain level of accuracy shows that \modsulq\ has a less favorable scaling with dimension.

\begin{theorem}[Sample complexity lower bound for \modsulq]
There are constants $c$ and $c'$ such that if
	\begin{align*}
	n < c \frac{d^{3/2} \sqrt{\log(d/\delta)}}{ \priveps } (1 - c' (1 - \rho)),
	\end{align*}
then there is a dataset of size $n$ in dimension $d$ such that the top PCA direction $v$ and the output $\hat{v}$ of \modsulq\ satisfy $\expe{ \left| \ip{\hat{v}_1}{v_1} \right| } \le \rho$.
\label{thm:lowersulq}
\end{theorem}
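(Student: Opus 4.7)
The plan is to exhibit a single worst-case dataset and exploit the specific Gaussian noise structure of \modsulq. Unlike Theorem~\ref{thm:lowerexp}, whose argument must cover every $\priveps$-differentially private procedure, here the extra $\sqrt{d \log(d/\delta)}$ factor arises precisely because \modsulq\ adds dimension-independent i.i.d.\ noise to each entry of $A$. I would take $\mc{D}$ to consist of $n$ copies of $e_1$, so that $A = e_1 e_1^T$, $v_1 = e_1$, and $\Delta = 1$; this collapses the ``signal'' in $A$ to a rank-one projector that is directly comparable to the operator norm of $N$.

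Writing $A + N$ in block form with respect to the orthogonal decomposition $\R^d = \R e_1 \oplus e_1^\perp$,
\begin{align*}
A + N = \begin{pmatrix} 1 + g & w^T \\ w & M \end{pmatrix},
\end{align*}
yields a scalar $g = N_{11} \sim \mathcal{N}(0, \beta^2)$, a vector $w \in \R^{d-1}$ with i.i.d.\ $\mathcal{N}(0, \beta^2)$ coordinates, and a Wigner-type symmetric matrix $M$ of size $d-1$; all three are mutually independent. Writing $\hat{v}_1 = (a, b)^T$ with $a \in \R$ and $b \in \R^{d-1}$, the eigenvalue equation forces $b = a(\lambda I - M)^{-1} w$ with $\lambda = \lambda_1(A+N)$, so the unit-norm condition yields the resolvent identity $|\ip{\hat{v}_1}{v_1}|^2 = a^2 = 1/(1 + \|(\lambda I - M)^{-1}w\|^2)$. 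Combining this identity with the secular equation $\lambda - 1 - g = w^T (\lambda I - M)^{-1} w$ and Cauchy-Schwarz gives
\begin{align*}
a^2 \leq \frac{\|w\|^2}{\|w\|^2 + (\lambda - 1 - g)^2}.
\end{align*}

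The core step is to upper-bound this ratio with high probability. By Cauchy interlacing $\lambda \geq \mu_1(M)$, and a standard $\epsilon$-net argument (or a Bai--Yin-type non-asymptotic bound) yields $\mu_1(M) \geq c_0 \beta \sqrt{d}$ with probability $1 - o(1)$ for some constant $c_0 > 0$. Meanwhile, $\|w\|^2 \leq C_1 \beta^2 d$ and $|g| = O(\beta)$ hold with high probability by $\chi^2$ and Gaussian tail bounds. Substituting the value of $\beta$ from \eqref{eq:sulqsd} (whose leading term scales as $\Theta(d \sqrt{\log(d/\delta)} /(n \priveps))$), the requirement that the resulting upper bound on $a^2$ is at most $\rho^2$ algebraically rearranges to $n < c \frac{d^{3/2} \sqrt{\log(d/\delta)}}{\priveps} (1 - c'(1 - \rho))$ for suitable constants $c, c' > 0$. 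Since $|a| \leq 1$ almost surely, converting the high-probability bound on $|a|$ into the expectation bound $\E|a| \leq \rho$ is routine.

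The main technical obstacle is the concentration of $\mu_1(M)$: the \modsulq\ noise is not exactly the Gaussian Orthogonal Ensemble because the diagonal entries have variance $\beta^2$ rather than $2 \beta^2$, so off-the-shelf GOE tail bounds must be re-derived via an $\epsilon$-net on the unit sphere. However, the leading-order $\beta \sqrt{d}$ scaling of the operator norm is unaffected, and the deviation from GOE contributes only to the absolute constants $c_0, C_1, c, c'$. A secondary subtlety is that the Cauchy-Schwarz lower bound on $\|(\lambda I - M)^{-1} w\|^2$ is loose compared to a sharper analysis tracking the dominant eigenvector of $M$ (which would give $a^2 = O(1/d)$ rather than a constant); this slack is what produces the $(1 - c'(1 - \rho))$ factor and restricts the informative range of $\rho$ to values not too close to the minimal correlation this simple argument can rule out.
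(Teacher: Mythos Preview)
Your approach is correct in outline but takes a genuinely different route from the paper. The paper proves Theorem~\ref{thm:lowersulq} via Theorem~\ref{thm:sulq:lb}, which is an information-theoretic minimax argument: it builds a packing $\mathcal{C}$ of $K$ unit vectors on $\Sp^{d-1}$ (Lemma~\ref{lem:simplepack}), observes that for datasets consisting of $n$ copies of each $\mu \in \mathcal{C}$ the \modsulq\ output $A+N$ is a Gaussian vector with mean $\bar{\mu}$ and covariance $\beta^2 I$ so that pairwise KL divergences are at most $1/\beta^2$ (Lemma~\ref{lem:gaussKL}), and then applies Fano's inequality (Theorem~\ref{thm:fano}) to lower-bound the worst-case error of \emph{any} estimator of $\mu$ from $A+N$, in particular the top-eigenvector estimator. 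Your argument instead fixes a single instance $A = e_1 e_1^T$ and analyzes that specific estimator directly via a block decomposition, the secular equation, Cauchy interlacing, and random-matrix concentration for $\mu_1(M)$. Your route is more hands-on and makes the mechanism transparent (the noise spectrum at scale $\Theta(\beta\sqrt{d})$ swamping the unit signal), while the paper's Fano argument is estimator-agnostic and avoids random-matrix tail bounds entirely.

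Two technical remarks on your sketch. First, an $\epsilon$-net on the sphere yields the \emph{upper} tail $\mu_1(M) \le C\beta\sqrt{d}$, not the lower tail you need; to get $\mu_1(M) \ge c_0\beta\sqrt{d}$ with high probability you should instead combine $\E[\mu_1(M)] = \Theta(\beta\sqrt{d})$ (via a moment or comparison argument) with Gaussian concentration for the $O(1)$-Lipschitz functional $M \mapsto \mu_1(M)$. Second, your Cauchy--Schwarz bound $a^2 \le C_1 x^2 / (C_1 x^2 + (c_0 x - 1)^2)$ with $x = \beta\sqrt{d}$, when set equal to $\rho^2$ and solved near $\rho = 1$, actually gives $x \ge 1/c_0 + \Theta(\sqrt{1-\rho})$ and hence $n \le cD\,(1 - c''\sqrt{1-\rho})$ rather than the claimed $(1 - c'(1-\rho))$. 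This still implies the theorem as stated---choose $c$ a constant factor smaller and $c'$ large enough that the theorem's hypothesis is vacuous below the $\rho$-threshold your Cauchy--Schwarz bound can reach---but the algebraic rearrangement you assert is not literally correct.
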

Notice that the dependence on $n$ grows as $d^{3/2}$ in SULQ as opposed to $d$ in \ouralgorithm. Dimensionality reduction via PCA is often used in applications where the data points occupy a low dimensional space but are presented in high dimensions.  These bounds suggest that \ouralgorithm\ is better suited to such applications than \modsulq.

\section{Analysis of \ouralgorithm}
\label{sec:ppcaanalysis}

In this section we provide theoretical guarantees on the performance of \ouralgorithm.  The proof of Theorem \ref{thm:pppca:privacy} follows from the results on the exponential mechanism \citep{MT07}.  To find the sample complexity of \ouralgorithm\ we bound the density of the Bingham distribution, leading to a sample complexity for $k = 1$ that depends on the gap $\lambda_1 - \lambda_2$ between the top two eigenvalues.  We also prove a general lower bound on the sample complexity that holds for any $\priveps$-differentially private algorithm.  The lower bound matches our upper bound up to log factors, showing that \ouralgorithm\ is nearly optimal in terms of the scaling with dimension, privacy $\priveps$, and utility $\qangle(\cdot)$.

\subsection{Privacy guarantee}

We first give a proof of Theorem \ref{thm:pppca:privacy}.

\begin{proof}
Let $X$ be a data matrix whose $i$-th column is $x_i$ and $A = \frac{1}{n} X X^T$.  The \textsf{PP-PCA} algorithm is the exponential mechanism of~\cite{MT07} applied to the score function $n \cdot v^T A v$.  Consider $X' = [x_1\ x_2\ \cdots\ x_{n-1}\ x_{n}']$ differing from $X$ in a single column and let $A' = \frac{1}{n} X' X'^T$.  We have
	\begin{align*}
	\max_{v \in \Sp^{d-1}} \left| n \cdot v^T A' v - n \cdot v^T A v \right| 
		&\le  \left| v^T (x_{n}'x_{n}'^T - x_{n} x_{n}^T) v \right| \\
		&\le  \left| \norm{v^T x_{n}'}^2 - \norm{v^T x_{n}}^2 \right| \\
		&\le 1.
	\end{align*}
The last step follows because $\| x_i \| \le 1$ for all $i$. The result now follows immediately from \citet[Theorem 6]{MT07}.
\end{proof}

\subsection{ Upper bound on utility }

The results on the exponential mechanism bound the gap between the value of the function $\qnorm(\hat{v}_1) = n \cdot \hat{v}_1^T A \hat{v}_1$ evaluated at the output $\hat{v}_1$ of the mechanism and the optimal value $q(v_1) = n \cdot \lambda_1$.  We derive a bound on the correlation $\qangle(\hat{v}_1) = |\ip{\hat{v}_1}{v_1}|$ via geometric arguments.

\begin{lemma}[Lemmas 2.2 and 2.3 of~\cite{Ball:97convex}]
\label{lem:cap_surface_area}
Let $\mu$ be the uniform measure on the unit sphere $\Sp^{d-1}$.  For any $x \in \Sp^{d-1}$ and $0 \le c < 1$ the following bounds hold:
	\begin{align*}
	\frac{1}{2} \exp\left(- \frac{d-1}{2} \log \frac{2}{1 - c} \right)
	\le
	\mu\left( \left\{ v \in \Sp^{d-1} : \ip{v}{x} \ge c \right\} \right) 
	\le
	\exp\left( -d c^2/2 \right).	  
	\end{align*}
\end{lemma}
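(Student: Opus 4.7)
The plan is to prove both bounds by reducing to a one-dimensional integral via rotational invariance, then estimating a beta-type integrand in two different ways. By rotational invariance of $\mu$ on $\Sp^{d-1}$, I may assume $x = e_1$, so the quantity becomes $\prob{ V_1 \ge c }$ for $V$ uniform on the sphere. Projecting onto the first coordinate gives
\begin{align*}
\mu\bigl( \{v : \ip{v}{x} \ge c\} \bigr) = \frac{ \int_c^1 (1-t^2)^{(d-3)/2}\, dt }{ \int_{-1}^1 (1-t^2)^{(d-3)/2}\, dt },
\end{align*}
and the task reduces to upper- and lower-bounding this ratio.

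For the upper bound $\exp(-dc^2/2)$, the cleanest route is a Chernoff-type argument: represent $V$ as $G/\|G\|$ with $G \sim \mc{N}(0,I_d)$. Then $\{V_1 \ge c\}$ is contained in $\{G_1 \ge c \|G\|\}$, and combining concentration of $\|G\|^2$ around $d$ with the Gaussian tail $\prob{ G_1 \ge c\sqrt{d} } \le \exp(-dc^2/2)$ yields the claim. Alternatively, one may bound the numerator pointwise by $(1-t^2)^{(d-3)/2} \le e^{-(d-3)t^2/2}$ and compare with a standard lower bound on the beta-function normalization, recovering the same exponent after some bookkeeping.

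For the lower bound $\tfrac{1}{2}\bigl( \tfrac{1-c}{2} \bigr)^{(d-1)/2}$, I would restrict the numerator integral to a strategically chosen subinterval, e.g.\ $[c, (1+c)/2]$, and lower bound the integrand by its value at the right endpoint. On that interval $(1-t)(1+t) \ge \tfrac{1-c}{2} \cdot 1$, so the integrand is at least $\bigl( \tfrac{1-c}{2} \bigr)^{(d-3)/2}$, while the length of the subinterval contributes a factor of $(1-c)/2$, together reproducing the $(d-1)/2$ exponent. Pairing this with the elementary upper bound $\int_{-1}^1 (1-t^2)^{(d-3)/2}\, dt \le 2$ on the denominator produces the stated lower bound including the leading factor of $\tfrac{1}{2}$.

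The main obstacle is bookkeeping the exact constants and exponents: the three parameters $(d-1)/2$, $(d-3)/2$, and $d/2$ appear in slightly different roles (integrand exponent, final claimed exponent, Chernoff exponent), and matching them requires care at both endpoints $c \to 0$ and $c \to 1$. A geometric alternative in the spirit of \citet{Ball:97convex} --- comparing the cap to an inscribed half-ball around $cx$ of radius $\sqrt{1-c^2}$ --- can sidestep some of this bookkeeping by working directly with surface areas rather than one-dimensional densities, and is the route I would fall back on if the direct integration constants proved awkward.
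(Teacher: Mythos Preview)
The paper does not actually prove this lemma: it is quoted directly from Ball's lecture notes with a citation and no argument. So there is no in-paper proof to compare against; the implicit ``paper's proof'' is Ball's original, which is purely geometric (cap contained in a small ball for the upper bound, half-ball inscribed in the cap for the lower bound) --- exactly the fallback you mention in your last paragraph.

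Your integral approach for the lower bound is correct for $d \ge 3$: on $[c,(1+c)/2]$ one has $1-t \ge (1-c)/2$ and $1+t \ge 1$, so the numerator is at least $((1-c)/2)^{(d-1)/2}$, and the denominator $\int_{-1}^{1}(1-t^2)^{(d-3)/2}\,dt \le 2$ gives the stated bound with the leading $\tfrac12$. For $d=2$ that denominator is $\pi$, not at most $2$, so a separate one-line check is needed; this falls under the bookkeeping you already flag, and in any case the paper only ever uses the lemma with $d \ge 3$.

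Your upper-bound sketch is the weaker half. Writing $V = G/\|G\|$ and saying ``concentration of $\|G\|^2$ around $d$ plus $\Pr(G_1 \ge c\sqrt d)\le e^{-dc^2/2}$ yields the claim'' is not yet a proof: $G_1$ and $\|G\|$ are dependent, and a crude split such as $\{\,\|G\| \ge \sqrt d/2\,\}$ versus its complement loses the constant in the exponent (you recover only $e^{-dc^2/8}$ or similar). Since $e^{-dc^2/2}$ is asymptotically sharp, any slack in this step shows up in the final exponent. The alternative you mention --- bounding $(1-t^2)^{(d-3)/2}\le e^{-(d-3)t^2/2}$ and comparing with a Stirling-type lower bound on the beta normalizer --- does go through, but needs a separate trivial argument near $c=0$ and some care to reconcile $(d-3)/2$ with $d/2$. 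Ball's geometric route sidesteps all of this: the cap $\{v: \langle v,x\rangle \ge c\}$ sits inside the Euclidean ball of radius $\sqrt{1-c^2}$ centered at $cx$, and a direct surface-area comparison gives a clean $(1-c^2)^{(d-1)/2}\le e^{-(d-1)c^2/2}$ in one step. If you want the tight exponent with minimal pain, that is the argument to write out.
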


We are now ready to provide a proof of Theorem \ref{thm:upperexp}.

\begin{proof}
Fix a privacy level $\priveps$, target correlation $\rho$, and probability $\eta$.  Let $X$ be the data matrix and $B = (\priveps/2) XX^T$ and
	\begin{align*}
	\mc{U}_{\rho} = \left\{ u : |\ip{u}{v_1}| \ge \rho \right\}.
	\end{align*}
be the union of the two spherical caps centered at $\pm v_1$.  Let $\overline{\mc{U}}_{\rho}$ denote the complement of $\mc{U}_{\rho}$ in $\Sp^{d-1}$.

An output vector $\hat{v}_1$ is ``good'' if it is in $\mc{U}_{\rho}$.  We first give some bounds on the score function $\qnorm(u)$ on the boundary between $\mc{U}_{\rho}$ and $\overline{\mc{U}}_{\rho}$, where $\ip{u}{v_1} = \pm {\rho}$.  On this boundary, the function $\qnorm(u)$ is maximized when $u$ is a linear combination of $v_1$ and $v_2$, the top two eigenvectors of $A$.  It minimized when $u$ is a linear combination of $v_1$ and $v_d$.  Therefore
	\begin{align}
	\qnorm(u) &\le \frac{n \priveps}{2} ( \rho^2 \lambda_1 + (1 - \rho^2) \lambda_2) \qquad u \in \overline{\mc{U}}_{\rho} 
		\label{eq:outcapub} \\
	\qnorm(u) &\ge \frac{n \priveps}{2} ( \rho^2 \lambda_1 + (1 -\rho^2) \lambda_d) \qquad u \in \mc{U}_{\rho}.
		\label{eq:incaplb}
	\end{align}

Let $\mu(\cdot)$ denote the uniform measure on the unit sphere.  Then fixing an $0 \le b < 1$, using \eqref{eq:outcapub}, \eqref{eq:incaplb}, and the fact that $\lambda_d \ge 0$,
	\begin{align}
	\prob{ \overline{\mc{U}}_{\rho} }
	&\le \frac{ \prob{ \overline{\mc{U}}_{\rho} } }{ \prob{ \mc{U}_{\sigma} } }  
		\nonumber \\
	&= \frac{ \frac{1}{ \tensor*[_1]{F}{_1}\left(\frac{1}{2} k, \frac{1}{2} m, B\right) }
			\int_{\overline{\mc{U}}_{\rho}} \exp\left( u^T B u \right) d\mu
		}{
		\frac{1}{ \tensor*[_1]{F}{_1}\left(\frac{1}{2} k, \frac{1}{2} m, B\right) }
			\int_{\mc{U}_{\sigma}} \exp\left( u^T B u \right) d\mu
		} \nonumber \\
	&\le \frac{
		\exp\left( n (\priveps/2) \left( 
			\rho^2 \lambda_1 + (1 - \rho^2) \lambda_2  \right) 
			\right) \cdot \mu\left( \overline{\mc{U}}_{\rho} \right)
		}{
		\exp\left( n (\priveps/2) \left( 
			\sigma^2 \lambda_1 + (1 - \sigma^2) \lambda_d  \right) 
			\right) \cdot \mu\left( \mc{U}_{\sigma} \right)
		} \nonumber \\
	&\le 
		\exp\left( - \frac{n\priveps}{2} \left( 
			\sigma^2 \lambda_1 - (\rho^2 \lambda_1 +  (1 - \rho^2) \lambda_2)  
			\right) 
			\right) 
			\cdot 
			\frac{ \mu\left( \overline{\mc{U}}_{\rho} \right)
				}{
				\mu\left( \mc{U}_{\sigma} \right)
				}.
		\label{eq:exp:ub2}
	\end{align}
Applying the lower bound from Lemma \ref{lem:cap_surface_area} to the denominator of \eqref{eq:exp:ub2} and the upper bound $\mu\left( \overline{\mc{U}}_{\rho} \right) \le 1$ yields
\begin{align}
	\prob{ \overline{\mc{U}}_{\rho} }  & \le	\exp\left( - \frac{n\priveps}{2} \left( 
			\sigma^2 \lambda_1 - (\rho^2 \lambda_1 +  (1 - \rho^2) \lambda_2)  
			\right) 
			\right) 
			\cdot \exp\left( \frac{d-1}{2} \log \frac{2}{1 - \sigma} \right).
\label{eq:exp:exponent}
\end{align}

We must choose a $\sigma^2 > \rho^2$ to make the upper bound smaller than $1$.  More precisely,
	\begin{align*}
	\sigma^2 &> \rho^2 +  (1 - \rho^2) \frac{\lambda_2}{\lambda_1} \\
	1 - \sigma^2 &< (1 - \rho^2) \left( 1 - \frac{\lambda_2}{\lambda_1} \right).
	\end{align*}
For simplicity, choose 
	\begin{align*}
	1 - \sigma^2 = \frac{1}{2} (1 - \rho^2) \left( 1 - \frac{\lambda_2}{\lambda_1} \right).
	\end{align*}
So that 
	\begin{align*}
	\sigma^2 \lambda_1 - (\rho^2 \lambda_1 +  (1 - \rho^2) \lambda_2)  
	&= (1 - \rho^2) \lambda_1 - (1 - \sigma^2) \lambda_1 - (1 - \rho^2) \lambda_2 \\
	&= (1 - \rho^2) \left( \lambda_1
		- \frac{1}{2} (\lambda_1 - \lambda_2)
		- \lambda_2
		\right) \\
	&= \frac{1}{2} (1 - \rho^2) (\lambda_1 - \lambda_2)
	\end{align*}
and
	\begin{align*}
	\log \frac{2}{1 - \sigma}
	&< \log \frac{4}{1 - \sigma^2} \\
	&= \log \frac{8 \lambda_1}{(1 - \rho^2)(\lambda_1 - \lambda_2)}.
	\end{align*}
Setting the right hand side of \eqref{eq:exp:exponent} less than $\eta$ yields
	\begin{align*}
	\frac{n \priveps}{4} (1 - \rho^2) (\lambda_1 - \lambda_2) > \log\frac{1}{\eta} + \frac{d-1}{2} \log \frac{8 \lambda_1}{(1 - \rho^2)(\lambda_1 - \lambda_2)}.
	\end{align*}
Because $1 - \rho < 1 - \rho^2$, if we choose
	\begin{align*}
	n > \frac{d}{\priveps ( 1 - \rho )(\lambda_1 - \lambda_2) } 
		\left( 4 \frac{\log(1/\eta)}{d} 
			+ 2 \log \frac{8 \lambda_1}{(1 - \rho^2)(\lambda_1 - \lambda_2)} 
			\right),
	\end{align*}
then the output of \ouralgorithm\ will produce a $\hat{v}_1$ such that 
	\begin{align*}
	\prob{ \left| \ip{\hat{v}_1}{v_1} \right| < \rho }
		< \eta.
	\end{align*}
\end{proof}

\subsection{ Lower bound on utility }

We now turn to a general lower bound on the sample complexity for any differentially private approximation to PCA.  We construct $K$ databases which differ in a small number of points whose top eigenvectors are not too far from each other.  For such a collection, Lemma \ref{lem:lb:expcorr} shows that for any differentially private mechanism, the average correlation over the collection cannot be too large.  That is, any $\priveps$-differentially private mechanism cannot have high utility on all $K$ data sets.  The remainder of the argument is to construct these $K$ data sets.

The proof uses some simple eigenvalue and eigenvector computations.  A matrix of positive entries
	\begin{align}
	A = \left( 
		\begin{array}{cc}
		a & b \\
		b & c
		\end{array}
		\label{eq:twovec:mat}
		\right)
	\end{align}
has characteristic polynomial
	\begin{align*}
	\det(A - \lambda I) = \lambda^2 - (a + c) \lambda + (ac - b^2)
	\end{align*}
and eigenvalues
	\begin{align*}
	\lambda &= \frac{1}{2} (a + c) \pm \frac{1}{2} \sqrt{ (a+c)^2 - 4 (ac - b^2) } 
		 \\
	&= \frac{1}{2} (a + c) \pm \frac{1}{2} \sqrt{ (a - c)^2 + 4b^2 }.
	\end{align*}
The eigenvectors are in the directions $(b, -(a - \lambda))^T$.

We will also need the following Lemma, which is proved in the Appendix.

\begin{lemma}[Simple packing set]
\label{lem:simplepack}
For $\phi \in [ (2 \pi d)^{-1/2}, 1)$, there exists a set of 
	\begin{align}
	K = \frac{1}{8} \exp\left( (d-1) \log \frac{1}{\sqrt{1 - \phi^2}} \right)
	\label{eq:simplepack}
	\end{align}
vectors $\mc{C}$ in $\Sp^{d-1}$ such that for any pair $\mu, \nu \in \mc{C}$, the 
inner product between them is upper bounded by $\phi$:
	\begin{align*}
	\left| \ip{ \mu }{ \nu } \right| \le \phi.
	\end{align*}
\end{lemma}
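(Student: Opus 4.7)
The plan is to produce $\mc{C}$ by a maximal greedy packing on the sphere. Starting from an arbitrary unit vector, I repeatedly add any $\mu \in \Sp^{d-1}$ whose inner product with every previously selected vector has absolute value at most $\phi$, and stop when no such vector remains. The resulting $\mc{C} = \{\mu_1, \ldots, \mu_K\}$ satisfies the pairwise constraint by construction, so the content of the lemma is the lower bound on $K$.

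The key step is that maximality forces the ``double caps'' $D_i = \{v \in \Sp^{d-1} : |\ip{v}{\mu_i}| > \phi\}$ to cover $\Sp^{d-1}$: any uncovered point could have been appended to $\mc{C}$. Writing $C_\phi = \{v : \ip{v}{x} \ge \phi\}$ for a reference spherical cap (whose measure is independent of $x$ by rotational invariance of $\mu$), the union bound gives $K \cdot 2 \mu(C_\phi) \ge 1$, so $K \ge 1/(2 \mu(C_\phi))$, and the problem reduces to a sharp upper bound on $\mu(C_\phi)$.

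The target is $\mu(C_\phi) \le 4 (1-\phi^2)^{(d-1)/2}$, which immediately yields $K \ge \frac{1}{8}(1-\phi^2)^{-(d-1)/2}$ in the stated form. Using the explicit density of the first coordinate of a uniform point on $\Sp^{d-1}$, $\mu(C_\phi) = c_d \int_\phi^1 (1-t^2)^{(d-3)/2}\, dt$ with $c_d = \Gamma(d/2)/(\sqrt{\pi}\, \Gamma((d-1)/2))$. The substitution $s = t^2$ together with $1/(2\sqrt{s}) \le 1/(2\phi)$ on $[\phi^2, 1]$ upper-bounds the integral by $(1-\phi^2)^{(d-1)/2}/(\phi (d-1))$, and Stirling gives $c_d \le \sqrt{d/(2\pi)}$. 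The hypothesis $\phi \ge (2\pi d)^{-1/2}$ is precisely what forces $\phi \sqrt{2\pi d} \ge 1$, so after absorbing $d/(d-1) \le 2$ the desired prefactor falls out.

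The main obstacle is keeping these constants tight enough for the explicit $1/8$ in the statement to emerge. Applying the cap bound $\mu(C_\phi) \le \exp(-d\phi^2/2)$ from Lemma~\ref{lem:cap_surface_area} directly would only give $K \ge \frac{1}{2} \exp(d \phi^2/2)$, which has the right asymptotic growth but not the exact $(1-\phi^2)^{-(d-1)/2}$ shape; the Mills-type estimate above is what converts the Gaussian-in-$\phi$ form into the $(1-\phi^2)^{(d-1)/2}$ form and must be handled carefully at the lower end $\phi \approx (2\pi d)^{-1/2}$, where the $1/\phi$ from the substitution and the Stirling constant exactly balance.
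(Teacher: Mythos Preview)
Your argument is correct, but it is a genuinely different route from the paper's. The paper proves Lemma~\ref{lem:simplepack} via a probabilistic construction: it draws $N$ i.i.d.\ uniform points on $\Sp^{d-1}$, sets $f_i = \mathbf{1}(|\ip{Z_i}{Z_j}| > \phi\ \text{for some } j<i)$, bounds $\E[f_i \mid Z_1,\dots,Z_{i-1}] \le 2(i-1)(1-\phi^2)^{(d-1)/2}$ using the Csisz\'ar--Narayan cap bound (their Lemma~\ref{lem:expangle}), and then applies a martingale-type concentration lemma (their Lemma~\ref{lem:conc2}) to conclude that with positive probability at most half of the $f_i$ are $1$. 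Discarding the bad indices and setting $t=1/2$ gives $K = \lfloor N/2 \rfloor$ with $N \approx \frac{1}{2}(1-\phi^2)^{-(d-1)/2}$, hence the factor $1/8$.

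Your maximal-packing/covering argument bypasses the randomization and the concentration lemma entirely: you only need that maximality of $\mc{C}$ forces the double caps to cover $\Sp^{d-1}$, together with the same single-cap estimate $\mu(C_\phi) \lesssim (1-\phi^2)^{(d-1)/2}$. This is more elementary (no probabilistic existence step) and in fact yields a slightly better constant: your computation gives $\mu(C_\phi) \le \frac{d}{d-1}(1-\phi^2)^{(d-1)/2} \le 2(1-\phi^2)^{(d-1)/2}$, so $K \ge \frac{1}{4}(1-\phi^2)^{-(d-1)/2}$ rather than $\frac{1}{8}$. Both proofs use the hypothesis $\phi \ge (2\pi d)^{-1/2}$ at the same point, namely to make the cap estimate hold, and both are instances of the classical Gilbert--Varshamov heuristic (the paper via random coding, you via greedy covering).
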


The following Lemma gives a lower bound on the expected utility averaged over a set of databases which differ in a ``small'' number of elements.

\begin{lemma}
Let $\mc{D}_1, \mc{D}_2, \ldots, \mc{D}_K$ be $K$ databases which differ in the value of at most $\frac{\ln(K-1)}{\priveps}$ points, and let $u_1, \ldots, u_K$ be the top eigenvectors of $\mc{D}_1, \mc{D}_2, \ldots, \mc{D}_K$.  If $\calA$ is any $\priveps$-differentially private algorithm, then, 
	\begin{align*} 
	\sum_{i=1}^{K} 
		\E_{\calA}\left[ \left|\ip{\calA(\mc{D}_i)}{u_i} \right| \right]
	\le 
		K \left( 1 - \frac{1}{16} (1 - \max \left| \ip{ u_i }{ u_j } \right|) \right).
	\end{align*}
\label{lem:lb:expcorr}
\end{lemma}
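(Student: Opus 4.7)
The plan is to discretize the sphere into disjoint ``good regions'' $S_i$ around each $u_i$, bound $p_i = \Pr[\calA(\mc{D}_i) \in S_i]$ using a combination of group privacy and disjointness, and convert these probability bounds into the desired expectation bound. Let $\phi = \max_{i \ne j}|\ip{u_i}{u_j}|$, set $\sigma = \sqrt{(1+\phi)/2}$, and define $S_i = \{v \in \Sp^{d-1} : |\ip{v}{u_i}| \ge \sigma\}$. A short two-dimensional computation (project $v$ onto the span of $u_i$ and $u_j$ and diagonalize the resulting $2 \times 2$ quadratic form, whose eigenvalues are $1 \pm |\ip{u_i}{u_j}|$) gives $\ip{v}{u_i}^2 + \ip{v}{u_j}^2 \le 1 + |\ip{u_i}{u_j}| \le 1 + \phi$ for every $v \in \Sp^{d-1}$, from which $S_i \cap S_j$ has measure zero whenever $i \ne j$.

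Since $\mc{D}_i$ and $\mc{D}_k$ differ in at most $\ln(K-1)/\priveps$ entries, group privacy applied to $\calA$ yields $\Pr[\calA(\mc{D}_i) \in S] \le (K-1)\Pr[\calA(\mc{D}_k) \in S]$ for every measurable $S$; specializing to $S = S_i$ gives $\Pr[\calA(\mc{D}_k) \in S_i] \ge p_i/(K-1)$ for every $k$. The disjointness of the $S_j$'s gives $\sum_j \Pr[\calA(\mc{D}_i) \in S_j] \le 1$, and substituting the group-privacy lower bound for each off-diagonal term yields $p_i + (S - p_i)/(K-1) \le 1$ where $S = \sum_j p_j$. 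Solving for $p_i$ and summing over $i$ produces the key inequality $\sum_i p_i \le K/2$.

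To close, I would split each expectation as $\E_{\calA}[|\ip{\calA(\mc{D}_i)}{u_i}|] \le p_i \cdot 1 + (1-p_i)\sigma = \sigma + (1-\sigma)p_i$, sum over $i$, and apply $\sum_i p_i \le K/2$ to get $\sum_i \E_{\calA}[\cdot] \le K(1+\sigma)/2$. The elementary estimate $\sqrt{1-x} \le 1 - x/2$ on $[0,1]$ gives $\sigma \le 1 - (1-\phi)/4$, so $(1+\sigma)/2 \le 1 - (1-\phi)/8 \le 1 - (1-\phi)/16$, matching the stated bound. The main obstacle is the sharp counting step $\sum_i p_i \le K/2$: a direct application of group privacy alone only yields $\sum_i p_i \le K-1$, which is essentially $K$ for $K$ exponential in $d$ and hence too weak; the improvement comes from coupling group privacy with the per-row normalization enforced by disjointness of the caps.
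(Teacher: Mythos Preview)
Your proof is correct and follows essentially the same structure as the paper's: define disjoint double caps around each $u_i$, combine group privacy with the disjointness constraint to bound $\sum_i p_i$, and convert that probability bound into an expectation bound. The differences are stylistic---you parametrize the caps by an inner-product threshold and prove disjointness via the eigenvalues of $u_iu_i^T+u_ju_j^T$ (yielding slightly larger caps than the paper's distance-based caps with triangle-inequality disjointness), and you obtain $\sum_i p_i\le K/2$ by a direct calculation rather than by the paper's contradiction argument---but the core idea is the same.
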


\begin{proof}
Let
	\begin{align*}
	t = \min_{i \ne j} (\norm{ u_i - u_j }, \norm{u_i + u_j} ),
	\end{align*}
and $\mc{G}_i$ be the ``double cap'' around $\pm u_i$ of radius $t/2$:
	\begin{align*}
	\mc{G}_i = \left\{ u : \norm{u - u_i} < t/2 \right\}
		\cup \left\{ u : \norm{u + u_i} < t/2 \right\}.
	\end{align*}	
We claim that
	\begin{align}
	\label{eqn:probpacking}
	\sum_{i=1}^{K} \pr_{\calA}(\calA(\mc{D}_i) \notin G_i) \ge \frac{1}{2} (K - 1) .
	\end{align} 
The proof is by contradiction.  Suppose the claim is false.  Because all of the caps $\mc{G}_i$ are disjoint, and applying the definition of differential privacy,
	\begin{align*}
	\frac{1}{2}(K-1) 
	&> \sum_{i=1}^{K} \pr_{\calA}(\calA(\mc{D}_i) \notin G_i) \\
	&\ge \sum_{i=1}^{K} \sum_{i' \neq i} \pr_{\calA}(\calA(\mc{D}_i) \in G_{i'}) \\
	&\ge \sum_{i=1}^{K} \sum_{i' \neq i} 
		e^{-\priveps \cdot \ln(K-1)/\priveps} 
			\pr_{\calA}(\calA(\mc{D}_{i'}) \in G_{i'})  \\
	&\ge (K-1) \cdot \frac{1}{K-1} \cdot \sum_{i=1}^{K} 
		\pr_{\calA}(\calA(\mc{D}_i) \in G_i) \\
    &\ge K - \frac{1}{2}(K-1),
	\end{align*}
which is a contradiction, so \eqref{eqn:probpacking} holds. Therefore by the Markov inequality
	\begin{align*}
	\sum_{i=1}^{K} 
		\E_{\calA}\left[ \min(\norm{ \calA(\mc{D}_i) - u_i }^2, \norm{\calA(\mc{D}_i) + u_i}^2) \right] 
	&\ge \sum_{i=1}^{K} \pr(\calA(\mc{D}_i) \notin G_i) \cdot \frac{t^2}{4} \\
	&\ge \frac{1}{8} (K-1) t^2.
	\end{align*}
Rewriting the norms in terms of inner products shows
	\begin{align*}
	2K - 2 \sum_{i=1}^{K} 
		\E_{\calA}\left[ \left| \ip{ \calA(\mc{D}_i)}{ u_i } \right| \right] 
	&\ge \frac{1}{8} (K-1) \left(2 - 2 \max \left| \ip{ u_i }{ u_j } \right| \right),
	\end{align*} 
so
	\begin{align*}
	\sum_{i=1}^{K} \E_{\calA}\left[ \left| \ip{ \calA(\mc{D}_i)}{ u_i } \right| \right]
	&\le K \left( 1 - \frac{1}{8} \frac{K-1}{K} (1 - \max \left| \ip{ u_i }{ u_j } \right|) \right) \\
	&\le K \left( 1 - \frac{1}{16} (1 - \max \left| \ip{ u_i }{ u_j } \right|)\right).
	\end{align*}
\end{proof}

We can now prove Theorem \ref{thm:lowerexp}.

\begin{proof}
From Lemma \ref{lem:lb:expcorr}, given a set of $K$ databases differing in $\frac{ \ln(K-1) }{ \priveps }$ points with top eigenvectors $\{u_i : i =1, 2, \ldots, K\}$, for at least one database $i$,
	\begin{align*}
	\E_{\mc{A}}\left[ \left| \ip{ \mc{A}(\mc{D}_i) }{ u_i } \right| \right]
	&\le 1 - \frac{1}{16} \left( 1 - \max \left| \ip{ u_i }{ u_j } \right| \right)
	\end{align*}
for any $\priveps$-differentially private algorithm.  Setting the left side equal to some target $\rho$,
	\begin{align}
	1 - \rho \ge \frac{1}{16} \left( 1 - \max \left| \ip{ u_i }{ u_j } \right| \right).
	\label{eq:lb:corrcond}
	\end{align}
So our goal is construct these data bases such that the inner product between their eigenvectors is small.

Let $y = e_d$, the $d$-th coordinate vector, and let $\phi \in ( (2 \pi d)^{-1/2},1)$.  Lemma \ref{lem:simplepack} shows that there exists a packing $\mc{W} = \{w_1, w_2, \ldots, w_K\}$ of the sphere $\Sp^{d-2}$ spanned by $\{e_1,e_2,\ldots,e_{d-1}\}$ such that $\max_{i \ne j} |\ip{w_i}{w_j}| \le \phi$, where
	\begin{align*}
	K = \frac{1}{8} (1 - \phi)^{-(d-2)/2}.
	\end{align*}  
Choose $\phi$ such that $\ln(K-1) = d$.  This means 
	\begin{align*}
	1 - \phi &= \exp\left( -2 \cdot \frac{\ln 8+ \ln(1 + \exp(d)) }{ d - 2 } \right).
	\end{align*}
The right side is minimized for $d = 3$ but this leads to a weak lower bound $1 - \phi > 3.5 \times 10^{-5}$.  By contrast, for $d = 100$, the bound is $1 - \phi > 0.12$. In all cases, $1 - \phi$ is at least a constant value.
	
We construct a database with $n$ points for each $w_i$.  Let $\beta = \frac{d}{ n \priveps }$. For now, we assume that $\beta \leq \Delta \leq \frac{1}{2}$. The other case, when $\beta \geq \Delta$ will be considered later.  Because $\beta \leq \Delta$, we have
	\begin{align*}
	n > \frac{d}{\priveps \Delta}.
	\end{align*}
The construction uses a parameter $0 \leq m \leq 1$ that will be set as a function of the eigenvalue gap $\Delta$.  We will derive conditions on $n$ based on the requirements on $d$, $\priveps$, $\rho$, and $\Delta$.  For $i = 1, 2, \ldots, K$ let the data set $\mc{D}_i$ contain 
	\begin{itemize}
	\item $n ( 1 - \beta )$ copies of $\sqrt{m} y$
	\item $n \beta$ copies of $z_i = \frac{1}{\sqrt 2} y + \frac{1}{\sqrt 2} w_i$.
	\end{itemize}
Thus datasets $\mc{D}_i$ and $\mc{D}_j$ differ in the values of $n \beta = \frac{\ln(K -1)}{n \priveps}$ individuals. The second moment matrix $A_i$ of $\mc{D}_i$ is
	\begin{align*}
	A_i = ((1 - \beta)m + \frac{1}{2} \beta) y y^T + \frac{1}{2} \beta ( w_i^T y + y w_i^T) + \frac{1}{2} \beta w_i w_i^T.
	\end{align*}	
By choosing an basis containing $y$ and $w_i$, we can write this as
	\begin{align*}
	A_i = \left[ 
		\begin{array}{ccc}
		(1 - \beta)m + \frac{1}{2} \beta &  \frac{1}{2} \beta & \mbf{0} \\
		 \frac{1}{2} \beta & \frac{1}{2} \beta & \mbf{0} \\
		\mbf{0} & \mbf{0} & \mbf{0}
		\end{array}
		\right].
	\end{align*}
This is in the form \eqref{eq:twovec:mat}, with $a = (1 - \beta)m + \frac{1}{2} \beta$, $b = \frac{1}{2} \beta$, and $c = \frac{1}{2} \beta$.   

The matrix $A_i$ has two nonzero eigenvalues given by
	\begin{align}
	\lambda &= \frac{1}{2} (a + c) + \frac{1}{2} \sqrt{(a - c)^2 + 4b^2 }, \label{eq:lb:topeig} \\
	\lambda' &= \frac{1}{2} (a + c) - \frac{1}{2}  \sqrt{(a - c)^2 + 4b^2 } \nonumber ,
	\end{align}
The gap $\Delta$ between the top two eigenvalues is:
	\begin{align*} 
	\Delta = \sqrt{(a - c)^2 + 4b^2} = \sqrt{ m^2(1 - \beta)^2 + \beta^2}.
	\end{align*}
We can thus set $m$ in the construction to ensure an eigengap of $\Delta$:
	\begin{align}
	m = \frac{  \sqrt{ (\Delta^2 - \beta^2) } }{ 1 - \beta } .
	\label{eq:lb:mvalue}
	\end{align}

The top eigenvector of $A_i$ is given by 
	\begin{align*}
	u_i = \frac{b}{\sqrt{b^2 + (a - \lambda)^2}} y + \frac{(a - \lambda)}{ \sqrt{b^2 + (a - \lambda)^2}} w_i.
	\end{align*}
where $\lambda$ is given by \eqref{eq:lb:topeig}.  Therefore
	\begin{align}
	\max_{i \ne j} \left| \ip{u_i}{u_j} \right| 
	&\le  \frac{b^2}{b^2 + (a - \lambda)^2} + \frac{(a - \lambda)^2}{ b^2 + (a - \lambda)^2} \max_{i \ne j} \left| \ip{w_i}{w_j} \right| \nonumber \\
	&\le 1 - \frac{(a - \lambda)^2}{ b^2 + (a - \lambda)^2} (1 - \phi).
	\label{eq:lb:absdotprod}
	\end{align}
To obtain an upper bound on $\max_{i \ne j} \left| \ip{u_i}{u_j} \right|$ we must lower bound $\frac{(a - \lambda)^2}{b^2 + (a - \lambda)^2}$.  

Since $x/(\nu + x)$ is monotonically increasing in $x$ when $\nu > 0$, we will find a lower bound on $(a - \lambda)$. Observe that from \eqref{eq:lb:topeig},
	\begin{align*} 
	\lambda - a = \frac{b^2}{\lambda - c}.
	\end{align*}
So to lower bound $\lambda - a$ we need to upper bound $\lambda - c$.  We have
	\begin{align*}
	\lambda - c = \frac{1}{2} (a - c) + \frac{1}{2} \Delta = \frac{1}{2} \left( (1 - \beta) m + \Delta \right).
	\end{align*}
Because $b = \beta/2$,
	\begin{align*}
	(\lambda - a)^2 &> \left( \frac{ \beta^2 }{ 2 ( (1 - \beta) m  + \Delta) } \right)^2 = \frac{\beta^4}{4( (1 - \beta) m + \Delta)^2}.
	\end{align*}
Now, 
	\begin{align}
	\frac{(a - \lambda)^2}{b^2 + (a - \lambda)^2}
	&> \frac{ \beta^4 }{  \beta^2( (1 - \beta) m + \Delta)^2 + \beta^4 } \nonumber \\
	&= \frac{ \beta^2 }{ \beta^2 +  ((1 - \beta) m + \Delta)^2 }  \nonumber \\
	&> \frac{ \beta^2 }{ 5 \Delta^2 }, \label{eq:lb:coefflb}
	\end{align}
where the last step follows by plugging in $m$ from \eqref{eq:lb:mvalue} and using the fact that $\beta \leq \Delta$.
Putting it all together, we have from \eqref{eq:lb:corrcond}, \eqref{eq:lb:absdotprod}, and \eqref{eq:lb:coefflb}, and using the fact that $\phi$ is such that $\ln(K-1) = d$ and $\beta = \frac{ d }{ n \priveps }$, 
	\begin{align*}
	1 - \rho &\ge \frac{1}{16} \cdot \frac{(a - \lambda)^2}{ b^2 + (a - \lambda)^2} (1 - \phi) \\
	&> \frac{ 1 - \phi }{ 80 } \frac{ \beta^2  }{ \Delta^2 }  \\
	&= \frac{ 1 - \phi }{ 80 } \cdot \frac{ d^2 }{ n^2  \priveps^2 \Delta^2  },
	\end{align*}
which implies
	\begin{align*}
	n > \frac{ d }{ \priveps \Delta } \sqrt{ \frac{ 1 - \phi }{ 80 (1 - \rho) } }.
	\end{align*}
Thus for $\beta \leq \Delta \le 1/2$, any $\priveps$-differentially private algorithm needs $\Omega\left( \frac{ d }{ \priveps \Delta \sqrt{ 1 - \rho } } \right)$ points to get expected inner product $\rho$ on all data sets with eigengap $\Delta$.

We now consider the case where $\beta > \Delta$. We choose a slightly different construction here. The $i$-th database now consists of $n(1 - \beta)$ copies of the $0$ vector, and $n \beta$ copies of $\frac{\Delta}{\beta} w_i$. Thus, every pair of databases differ in the values of $n \beta = \frac{\ln(K-1)}{\priveps}$ people, and the eigenvalue gap between the top two eigenvectors is $\beta \cdot \frac{\Delta}{\beta} = \Delta$.

As the top eigenvector of the $i$-th database is $u_i = w_i$,  
	\begin{align*} 
	\max_{i \neq j} |\ip{u_i}{u_j}| = \max_{i \neq j} |\ip{w_i}{w_j}| \leq \phi. 
	\end{align*}
Combining this with \eqref{eq:lb:corrcond}, we obtain
	\begin{align*} 
	1 - \rho \geq \frac{1}{16}(1 - \phi),
	\end{align*}
which provides the additional condition in the Theorem.
\end{proof}

\section{Analysis of \modsulq}

\label{sec:sulqanalysis}
In this section we provide theoretical guarantees on the performance of the \modsulq\ algorithm.  Theorem \ref{thm:sulq:priv} shows that \modsulq\ is $(\priveps,\delta)$-differentially private.  Theorem \ref{thm:sulq:lb} provides a lower bound on the distance between the vector released by \modsulq\ and the true top eigenvector in terms of the privacy parameters $\priveps$ and $\delta$ and the number of points $n$ in the data set.  This implicitly gives a lower bound on the sample complexity of \modsulq.  We provide some graphical illustration of this tradeoff.

The following upper bound will be useful for future calculations : for two unit vectors $x$ and $y$, 
	\begin{align}
	\sum_{1\le i \le j \le d} (x_i x_j - y_i y_j)^2 \le 2.
	\label{eq:worstdiff}
	\end{align}
Note that this upper bound is achievable by setting $x$ and $y$ to be orthogonal elementary vectors.

\subsection{ Privacy guarantee }

We first justify the choice of $\beta^2$ in the \textsf{MOD-SULQ} algorithm by proving Theorem \ref{thm:sulq:priv}.

\begin{proof}
Let $B$ and $\hat{B}$ be two independent symmetric random matrices where $\{B_{ij} : 1 \le i \le j \le d \}$ and $\{\hat{B}_{ij} : 1 \le i \le j \le d\}$ are each sets of i.i.d. Gaussian random variables with mean $0$ and variance $\beta^2$.  Consider two data sets $\mc{D} = \{x_i : i = 1, 2, \ldots, n\}$ and $\hat{\mc{D}} = \mc{D}_1 \cup \{\hat{x}_n \} \setminus \{x_n\}$ and let $A$ and $\hat{A}$ denote their second moment matrices.  Let $G = A + B$ and $\hat{G} = \hat{A} + \hat{B}$.  We first calculate the log ratio of the densities of $G$ and $\hat{G}$ at a point $H$:
	\begin{align*}
	\log \frac{ f_{G}(H) }{ f_{\hat{G}}(H) }
	&= \sum_{1 \le i \le j \le d} \left( - \frac{1}{2 \beta^2} (H_{ij} - A_{ij})^2 
		+ \frac{1}{2\beta^2} (H_{ij} - \hat{A}_{ij})^2 \right) \\
	&= \frac{1}{2 \beta^2} \sum_{1 \le i \le j \le d} \left( 
		\frac{2}{n} (H_{ij} - A_{ij}) ({x}_{n,i} {x}_{n,j} - \hat{x}_{n,i} \hat{x}_{n,j})
		+ \frac{1}{n^2} (\hat{x}_{n,i} \hat{x}_{n,j} - x_{n,i} x_{n,j})^2
		\right).
	\end{align*}
	
From \eqref{eq:worstdiff} the last term is upper bounded by $2/n^2$. To upper bound the first term,
	\begin{align*}
	\sum_{1 \le i \le j \le d} |\hat{x}_{n,i} \hat{x}_{n,j} - x_{n,i} x_{n,j}|
	&\le 2 \max_{a : \norm{a} \le 1} \sum_{1 \le i \le j \le d} a_i a_j \\ 
	&\le 2 \cdot \frac{1}{2} (d^2 + d) \cdot \frac{1}{d} \\
	&= d + 1.
	\end{align*}
Note that this bound is not too loose---by taking $\hat{x} = d^{-1/2} \mbf{1}$ and $x = (1, 0, \ldots,0)^T$, this term is still linear in $d$.
	
Then for any measurable set $\mc{S}$ of matrices,
	\begin{align}
	\prob{ G \in \mc{S} } \le \exp\left( \frac{1}{2 \beta^2} 
			\left( \frac{2}{n} (d + 1) \gamma + \frac{3}{n^2} \right) 
			\right)
		\prob{ \hat{G} \in \mc{S} } + \prob{ B_{ij} > \gamma\  {\text{for all\;}} i,j }.
	\label{eq:sulq:dp}
	\end{align}

To handle the last term, use a union bound over the $(d^2 + d)/2$ variables $\{B_{ij}\}$ together with the tail bound, which holds for $\gamma > \beta$:
	\begin{align*}
	\prob{ B_{ij} > \gamma } \le \frac{1}{ \sqrt{2 \pi} } e^{- \gamma^2/2 \beta^2 }.
	\end{align*}
Thus setting $\prob{ B_{ij} > \gamma\  {\text{for some\;}} i,j } = \delta$ yields the condition
	\begin{align*}
	\delta = \frac{d^2 + d}{2 \sqrt{2 \pi} } e^{- \gamma^2/2 \beta^2 }.
	\end{align*}
Rearranging to solve for $\gamma$ gives
	\begin{align*}
	\gamma = \max\left( \beta, \beta \sqrt{2 \log \left( \frac{d^2 + d}{ \delta 2 \sqrt{2 \pi} } \right) } \right) = \beta \sqrt{2 \log \left( \frac{d^2 + d}{ \delta 2 \sqrt{2 \pi} } \right) }
	\end{align*}
for $d > 1$ and $\delta < 3/\sqrt{2 \pi e}$. This then gives an expression for $\priveps$ to make \eqref{eq:sulq:dp} imply $(\priveps,\delta)$ differential privacy:
	\begin{align*}
	\priveps &= \frac{1}{2 \beta^2} 
			\left( \frac{2}{n} (d + 1) \gamma + \frac{2}{n^2}   \right)\\
		&= \frac{1}{2 \beta^2} 
			\left( \frac{2}{n} (d + 1) \beta \sqrt{2 \log \left( \frac{d^2 + d}{ \delta 2 \sqrt{2 \pi} } \right) } + \frac{2}{n^2} \right).
	\end{align*}
Solving for $\beta$ using the quadratic formula yields the particularly messy expression in \eqref{eq:sulqsd}:
\begin{align*}
\beta &=  \frac{d + 1}{2 n \priveps} 
				\sqrt{ 2 \log \left( \frac{d^2 + d}{ \delta 2 \sqrt{2 \pi} } \right) }
+ \frac{1}{2 n \priveps} \left( 2 (d+1)^2 \log \left( \frac{d^2 + d}{ \delta 2 \sqrt{2 \pi}}\right) + 4 \priveps \right)^{1/2}  \\
& \le  \frac{d+1}{ n \priveps} \sqrt{ 2 \log \left( \frac{d^2 + d}{ \delta 2 \sqrt{2 \pi} } \right) }
+ \frac{1}{ \sqrt{\priveps} n}.
\end{align*}
\end{proof}

\subsection{Proof of Theorem \ref{thm:lowersulq}}

In this section we provide theoretical guarantees on the performance of the \modsulq\ algorithm.  Theorem \ref{thm:sulq:priv} shows that \modsulq\ is $(\priveps,\delta)$-differentially private.  Theorem \ref{thm:sulq:lb} provides a lower bound on the distance between the vector released by \modsulq\ and the true top eigenvector in terms of the privacy parameters $\priveps$ and $\delta$ and the number of points $n$ in the data set.  This implicitly gives a lower bound on the sample complexity of \modsulq.  We provide some graphical illustration of this tradeoff.  The main tool in our lower bound is a generalization by~\cite{Yu:97assouad} of an information-theoretic inequality due to Fano.

\begin{theorem}[Fano's inequality~\citep{Yu:97assouad}] \label{thm:fano}
Let $\mc{R}$ be a set and $\Theta$ be a parameter space with a pseudo-metric $d(\cdot)$.  Let $\mc{F}$ be a set of $r$ densities $\{f_1,\ldots,f_r\}$ on $\mc{R}$ corresponding to parameter values $\{\theta_1,\ldots,\theta_r\}$ in $\Theta$.  Let $X$ have distribution $f \in \mc{F}$ with corresponding parameter $\theta$ and let $\hat{\theta}(X)$ be an estimate of $\theta$. If, for all $i$ and $j$
	\begin{align*}
	d(\theta_i, \theta_j) \geq \tau
	\end{align*}
and 
	\begin{align*}
	\kldiv{ f_i }{ f_j } \leq \gamma,
	\end{align*}
then
	\begin{align*}
	\max_j \E_j[ d(\hat{\theta},\theta_j)] 
	\geq
	\frac{\tau}{2} \left( 1 - \frac{\gamma+\log 2}{\log r} \right),
	\end{align*}
where $\E_j[\cdot]$ denotes the expectation with respect to distribution $f_j$.
\end{theorem}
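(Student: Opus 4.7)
The plan is to reduce the estimation problem to a multi-way hypothesis test via a minimum-distance decoder, apply the classical (discrete) Fano inequality to that test, and then convert the resulting probability-of-error bound into an expected-distance bound through Markov's inequality. To set up the hypothesis test, I would let $J$ be uniform on $\{1,\ldots,r\}$, draw $X \sim f_J$, and define the induced decoder $\hat J(X) = \argmin_{j} d(\hat\theta(X), \theta_j)$, with ties broken arbitrarily.

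The first step is a geometric reduction. If $d(\hat\theta(X), \theta_J) < \tau/2$, then for every $i \neq J$ the (pseudo-)triangle inequality gives $d(\hat\theta, \theta_i) \geq d(\theta_i,\theta_J) - d(\hat\theta,\theta_J) > \tau/2$, which forces $\hat J = J$. Contrapositively, for each $j$ we have $\pr_j(\hat J \neq J) \leq \pr_j(d(\hat\theta,\theta_j) \geq \tau/2)$, so Markov's inequality yields $\E_j[d(\hat\theta,\theta_j)] \geq (\tau/2)\, \pr_j(\hat J \neq J)$.

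The second step is the information-theoretic bound. Because $\hat J$ is a (deterministic) function of $X$, the data-processing inequality gives $H(J\mid X) \leq H(J\mid \hat J)$, and the standard Fano inequality gives $H(J\mid \hat J) \leq \log 2 + P_e \log r$, where $P_e = \pr(\hat J \neq J)$ is averaged under the uniform prior on $J$. Combined with $H(J) = \log r$, this rearranges to $P_e \geq 1 - (I(J;X) + \log 2)/\log r$. To bound $I(J;X) \leq \gamma$, I would write $I(J;X) = \tfrac{1}{r}\sum_j \kldiv{f_j}{\bar f}$ with $\bar f = \tfrac{1}{r}\sum_i f_i$, and apply convexity of $\kl$ in its second argument: $\kldiv{f_j}{\bar f} \leq \tfrac{1}{r}\sum_i \kldiv{f_j}{f_i} \leq \gamma$.

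Finally, since $P_e = \tfrac{1}{r}\sum_j \pr_j(\hat J \neq J)$, we have $\max_j \pr_j(\hat J \neq J) \geq P_e$, and combining with the Markov step of the first paragraph delivers the stated inequality. The main bookkeeping obstacle is that $d(\cdot,\cdot)$ is only a pseudo-metric rather than a metric, so I would note that symmetry and the triangle inequality still hold, and the hypothesis $d(\theta_i,\theta_j) \geq \tau$ (interpreted for $i \neq j$) rules out the only degenerate case, so the geometric reduction goes through verbatim.
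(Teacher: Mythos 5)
Your proof is correct, and it is the standard argument for this result: reduction to a multi-way test via the minimum-distance decoder, the triangle-inequality/Markov step, discrete Fano plus data processing, and the bound $I(J;X)\le \frac{1}{r^2}\sum_{i,j}\kldiv{f_j}{f_i}\le\gamma$ via convexity of $\kl$ in its second argument. The paper itself gives no proof of this theorem --- it imports it from \citet{Yu:97assouad} --- and your argument is essentially the one in that source, with the constants ($\log 2$ from the binary entropy term, $\log r \ge \log(r-1)$) accounted for correctly.
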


To use this inequality, we will construct a set of densities on the set of covariance matrices corresponding distribution of the random matrix in the \textsf{MOD-SULQ} algorithm under different inputs.  These inputs will be chosen using a set of unit vectors which are a packing on the surface of the unit sphere.

\begin{lemma}
\label{lem:gaussKL}
Let $\Sigma$ be a positive definite matrix and let $f$ denote the density $\mathcal{N}(a, \Sigma)$ and $g$ denote the density $\mathcal{N}(b,\Sigma)$.  Then $\kldiv{f}{g} = \frac{1}{2} (a - b)^T \Sigma^{-1} (a - b)$.
\end{lemma}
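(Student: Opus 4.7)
The plan is to carry out a direct computation of the KL divergence using the closed form of the multivariate Gaussian density. Writing $\kldiv{f}{g} = \E_f[\log f(X) - \log g(X)]$, I will substitute the log densities
$\log p_\mu(x) = -\frac{d}{2}\log(2\pi) - \frac{1}{2}\log\det(\Sigma) - \frac{1}{2}(x-\mu)^T\Sigma^{-1}(x-\mu)$
for $\mu = a$ and $\mu = b$. Since both $f$ and $g$ share the same covariance $\Sigma$, the dimension-dependent and determinant terms cancel in the log-ratio, leaving only the difference of the two quadratic forms.

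Next I would simplify $(x-b)^T\Sigma^{-1}(x-b) - (x-a)^T\Sigma^{-1}(x-a)$ by the substitution $y = x - a$, so that $x - b = y + (a-b)$. Expanding the quadratic form yields
\begin{align*}
(x-b)^T\Sigma^{-1}(x-b) - (x-a)^T\Sigma^{-1}(x-a) = 2\, y^T \Sigma^{-1}(a-b) + (a-b)^T\Sigma^{-1}(a-b).
\end{align*}
Under $X \sim f$, the random vector $y = X - a$ has mean zero, so the first (linear-in-$y$) term vanishes in expectation. The second term is deterministic and passes through the expectation, giving
\begin{align*}
\kldiv{f}{g} = \frac{1}{2}(a-b)^T \Sigma^{-1}(a-b),
\end{align*}
as claimed. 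There is no real obstacle here: the lemma is the standard ``same-covariance'' special case of the Gaussian KL formula, and the only step that requires any care is noting that the $\log\det(\Sigma)$ terms cancel and that the cross term is linear in the centered variable $y$, hence has zero mean under $f$.
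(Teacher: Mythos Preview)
Your proposal is correct and is essentially the same direct computation as the paper's proof: both write $\kldiv{f}{g}=\E_f[\log f-\log g]$, cancel the shared normalization, and reduce to the expectation of the difference of quadratic forms. The only cosmetic difference is that the paper expands $(x-a)^T\Sigma^{-1}(x-a)$ and $(x-b)^T\Sigma^{-1}(x-b)$ directly and uses $\E_f[x]=a$, whereas you first substitute $y=x-a$ so the linear term visibly has mean zero; the arguments are otherwise identical.
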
\begin{proof}
This is a simple calculation:
	\begin{align*}
	\kldiv{f}{g} &= \E_{x \sim f}\left[ -\frac{1}{2} (x - a)^T \Sigma^{-1} (x - a) + \frac{1}{2} (x - b) \Sigma^{-1} (x - b) \right] \\
	&= \frac{1}{2} \left( a^T \Sigma^{-1} a - a^T \Sigma^{-1} b - b^T \Sigma^{-1} a + b^T \Sigma^{-1} b \right) \\
	&= \frac{1}{2} (a - b)^T \Sigma^{-1} (a - b).
	\end{align*}
\end{proof}

The next theorem is a lower bound on the expected distance between the vector output by \textsf{MOD-SULQ} and the true top eigenvector.  In order to get this lower bound, we construct a class of data sets and use Theorem \ref{thm:fano} to derive a bound on the minimax error over the class.

\begin{theorem}[Utility bound for \modsulq]
\label{thm:sulq:lb}
Let $d$, $n$, and $\priveps > 0$ be given and let $\beta$ be given by Algorithm \ref{alg:modified_sulq} so that the output of \modsulq\ is $(\priveps,\delta)$-differentially private for all data sets in $\R^d$ with $n$ elements.  Then there exists a data set with $n$ elements such that if $\hat{v}_1$ denotes the output of \modsulq\ and $v_1$ is the top eigenvector of the empirical covariance matrix of the data set, the expected correlation $\ip{\hat{v}_1}{v_1}$ is upper bounded:
	\begin{align}
	\E \left[ |\ip{\hat{v}_1}{v_1}| \right] 
	&\le \min_{\phi \in \Phi} \left( 1 - \frac{ (1 - \phi) }{4} \left( 1 - \frac{ 1/\beta^2 + \log 2 }{ (d-1) \log \frac{1}{\sqrt{1 - \phi^2}} - \log(8)} \right)^2 \right),
	\label{eq:sulq:utilbnd}
	\end{align}
where 
	\begin{align}
	\Phi \in 
	\left[ \left. 
	\max\left\{ \frac{1}{\sqrt{2 \pi d}}, 
		\sqrt{1 - \exp\left( - \frac{2 \log(8d)}{d-1} \right)}, 
		\sqrt{ 1 - \exp\left( - \frac{2/\beta^2 + \log(256)}{d-1} \right) }
		\right\}, 
	1 \right) \right. .
	\label{eq:phibounds}
	\end{align}
\end{theorem}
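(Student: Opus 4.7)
The plan is to apply Fano's inequality (Theorem \ref{thm:fano}) to a family of worst-case datasets built from the packing of Lemma \ref{lem:simplepack}. Fix $\phi$ in the admissible range \eqref{eq:phibounds} and let $\{u_1,\ldots,u_K\}\subset\Sp^{d-1}$ be the packing guaranteed by that lemma, with $K=\frac{1}{8}(1-\phi^2)^{-(d-1)/2}$ and $|\ip{u_i}{u_j}|\le\phi$ for $i\ne j$. For each $i$, take $\mc{D}_i$ to consist of $n$ copies of $u_i$, so the empirical second moment matrix is $A_i=u_iu_i^T$ and its unique top eigenvector is $u_i$ itself.

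Under $\mc{D}_i$, the output of \modsulq\ is a deterministic function of $A_i+N$, where $N$ is symmetric with i.i.d.\ $\mathcal{N}(0,\beta^2)$ entries in its upper triangle. Treating the upper triangular part as an element of $\R^{d(d+1)/2}$, the induced density $f_i$ is Gaussian with mean equal to (the vectorization of) $A_i$ and covariance $\beta^2 I$; by Lemma \ref{lem:gaussKL} together with \eqref{eq:worstdiff},
\begin{align*}
\kldiv{f_i}{f_j}=\frac{1}{2\beta^2}\sum_{1\le k\le l\le d}(u_{i,k}u_{i,l}-u_{j,k}u_{j,l})^2\le\frac{1}{\beta^2},
\end{align*}
so we may take $\gamma=1/\beta^2$ in Theorem \ref{thm:fano}. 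Endow the sphere with the eigenvector-natural pseudo-metric $d(u,v)=\min(\|u-v\|,\|u+v\|)$, which satisfies $d(u,v)^2=2(1-|\ip{u}{v}|)$; the packing bound gives $d(u_i,u_j)\ge\sqrt{2(1-\phi)}=:\tau$. Applying Theorem \ref{thm:fano} with $r=K$ and $\log K=(d-1)\log\tfrac{1}{\sqrt{1-\phi^2}}-\log 8$ yields, for some index $j$,
\begin{align*}
\E_j[d(\hat{v}_1,u_j)]\ge\frac{\sqrt{2(1-\phi)}}{2}\left(1-\frac{1/\beta^2+\log 2}{(d-1)\log\tfrac{1}{\sqrt{1-\phi^2}}-\log 8}\right).
\end{align*}

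To conclude, I convert the $d$-distance bound into a correlation bound via Jensen's inequality $\E[d^2]\ge(\E[d])^2$: combined with $d(\hat v_1,u_j)^2=2(1-|\ip{\hat v_1}{u_j}|)$ this gives exactly the estimate \eqref{eq:sulq:utilbnd}, and taking the infimum over admissible $\phi$ produces the stated result. The three lower bounds on $\phi$ in \eqref{eq:phibounds} arise respectively from the range required by Lemma \ref{lem:simplepack}, from ensuring $K\gtrsim d$, and from forcing the Fano factor $1-(\gamma+\log 2)/\log K$ to be positive so the bound is nontrivial.

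The main subtlety is the reduced-dimension Gaussian bookkeeping: because $N$ has $d(d+1)/2$ independent entries (not $d^2$), the KL computation must be done on the vectorization over indices $k\le l$, and this is precisely where \eqref{eq:worstdiff} provides the clean $1/\beta^2$ bound. The passage from expected $d$-distance to expected correlation via Jensen squares the Fano factor, which both explains the squared form appearing in \eqref{eq:sulq:utilbnd} and, once the \modsulq\ noise level $\beta\sim d\sqrt{\log(d/\delta)}/(n\priveps)$ is inverted, produces the $d^{3/2}$ scaling stated in Theorem \ref{thm:lowersulq}.
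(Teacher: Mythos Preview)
Your proposal is correct and follows essentially the same route as the paper: build the datasets from the packing of Lemma~\ref{lem:simplepack}, compute the KL divergence on the $d(d+1)/2$-dimensional Gaussian via Lemma~\ref{lem:gaussKL} and \eqref{eq:worstdiff}, apply Fano, and square via Jensen to reach the correlation bound. Your explicit use of the sign-quotient pseudo-metric $d(u,v)=\min(\|u-v\|,\|u+v\|)$ is a slight improvement in clarity over the paper, which works with the plain Euclidean norm and handles the sign implicitly when passing from $\|\hat v-\mu\|$ to $|\ip{\hat v}{\mu}|$.
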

\begin{proof}
For $\phi \in [ (2 \pi d)^{-1/2}, 1)$, Lemma \ref{lem:simplepack} shows there exists a set of $K$ unit vectors $\mc{C}$ such that for $\mu, \nu \in \mc{C}$, the inner product between them satisfies $\left| \ip{ \mu }{ \nu } \right| < \phi$, where $K$ is given by \eqref{eq:simplepack}.  Note that for small $\phi$ this setting of $K$ is loose, but any orthonormal basis provides $d$ unit vectors which are orthogonal, setting $K = d$ and solving for $\phi$ yields
	\begin{align*}
	\left( 1 - \exp\left( - \frac{ 2 \log(8d) }{ d-1 } \right) \right)^{1/2}.
	\end{align*}
Setting the lower bound on $\phi$ to the maximum of these two yields the set of $\phi$ and $K$ which we will consider in \eqref{eq:phibounds}.

For any unit vector $\mu$, let
	\begin{align*}
	A(\mu) = \mu \mu^T + N,
	\end{align*}
where $N$ is a $d \times d$ symmetric random matrix such that $\{ N_{ij} : 1 \le i \le j \le d \}$ are i.i.d. $\mc{N}(0,\beta^2)$, where $\beta^2$ is the noise variance used in the \modsulq\ algorithm.  Due to symmetry, the matrix $A(\mu)$ can be thought of as a jointly Gaussian random vector on the $d (d + 1)/2$ variables $\{A_{ij}(\mu) : 1 \le i \le j \le d\}$.  The mean of this vector is 
	\begin{align*}
	\bar{\mu} = \left( \mu_1^2, \mu_2^2, \ldots, \mu_{d}^2, \mu_1 \mu_2, \mu_1 \mu_3, \ldots, \mu_{d-1} \mu_d \right)^T,
	\end{align*}
and the covariance is $\beta^2 I_{d (d+1)/2}$.  Let $f_{\mu}$ denote the density of this vector.

For $\mu,\nu \in \mc{C}$, the divergence between $f_{\mu}$ and $f_{\nu}$ can be calculated using Lemma \ref{lem:gaussKL}:
	\begin{align}
	\kldiv{ f_{\mu} }{ f_{\nu} } &= \frac{1}{2} (\bar{\mu} - \bar{\nu})^T \Sigma^{-1} (\bar{\mu} - \bar{\nu}) \nonumber \\
	&= \frac{1}{2 \beta^2} \norm{ \bar{\mu} - \bar{\nu} }^2 \nonumber \\
	&\le \frac{1}{\beta^2}. \label{eq:packdiv}
	\end{align}
The last line follows from the fact that the vectors in $\mc{C}$ are unit norm.

For any two vectors $\mu,\nu \in \mc{C}$, lower bound the Euclidean distance between them using the upper bound on the inner product:
	\begin{align}
	\norm{\mu - \nu} \ge \sqrt{ 2 ( 1 - \phi) }.
	\label{eq:packdist}
	\end{align}

Let $\Theta = \Sp^{d-1}$ with the Euclidean norm and $\mc{R}$ be the set of distributions $\{A(\mu) : \mu \in \Theta\}$.  From \eqref{eq:packdist} and \eqref{eq:packdiv}, the set $\mc{C}$ satisfies the conditions of  Theorem \ref{thm:fano} with $\mc{F} = \{f_{\mu} : \mu \in \mc{C}\}$, $r = K$, $\tau = \sqrt{2 (1 - \phi)}$, and $\gamma = \frac{1}{\beta^2}$.  The conclusion of the Theorem shows that for \modsulq,
	\begin{align}
	\max_{\mu \in \mc{C}} \E_{f_{\mu}}\left[  \norm{ \hat{v} - \mu } \right]
	&\ge 
	\frac{ \sqrt{2 (1 - \phi)} }{2} \left( 1 - \frac{ 1/\beta^2 + \log 2 }{\log K} \right).
	\label{eq:fanonorm}
	\end{align}
This lower bound is vacuous when the term inside the parenthesis is negative, which imposes further conditions on $\phi$.  Setting $\log K = 1/\beta^2 + \log 2$, we can solve to find another lower bound on $\phi$:
	\begin{align*}
	\phi \ge \sqrt{ 1 - \exp\left( - \frac{2/\beta^2 + \log(256)}{d-1} \right) }.
	\end{align*}	
This yields the third term in \eqref{eq:phibounds}.  Note that for larger $n$ this term will dominate the others.
	
Using Jensen's inequality on the the left side of \eqref{eq:fanonorm}:
	\begin{align*}
	\max_{\mu \in \mc{C}} \E_{f_{\mu}}\left[  2 (1 - |\ip{\hat{v}}{\mu}|) \right]
	\ge \frac{ (1 - \phi) }{2} \left( 1 - \frac{ 1/\beta^2 + \log 2 }{\log K} \right)^2.
	\end{align*}
So there exists a $\mu \in \mc{C}$ such that 
	\begin{align}
	\E_{f_{\mu}}\left[ |\ip{\hat{v}}{\mu}| \right] 
	&\le 1 - \frac{ (1 - \phi) }{4} \left( 1 - \frac{ 1/\beta^2 + \log 2 }{\log K} \right)^2.
	\label{eq:fanoip}
	\end{align}
Consider the data set consisting of $n$ copies of $\mu$.  The corresponding covariance matrix is $\mu \mu^T$ with top eigenvector $v_1 = \mu$.  The output of the algorithm \textsf{MOD-SULQ} applied to this data set is an estimator of $\mu$ and hence satisfies \eqref{eq:fanoip}.  Minimizing over $\phi$ gives the desired bound.
\end{proof}

\begin{figure}[th]
\centering
\includegraphics[width=5in]{./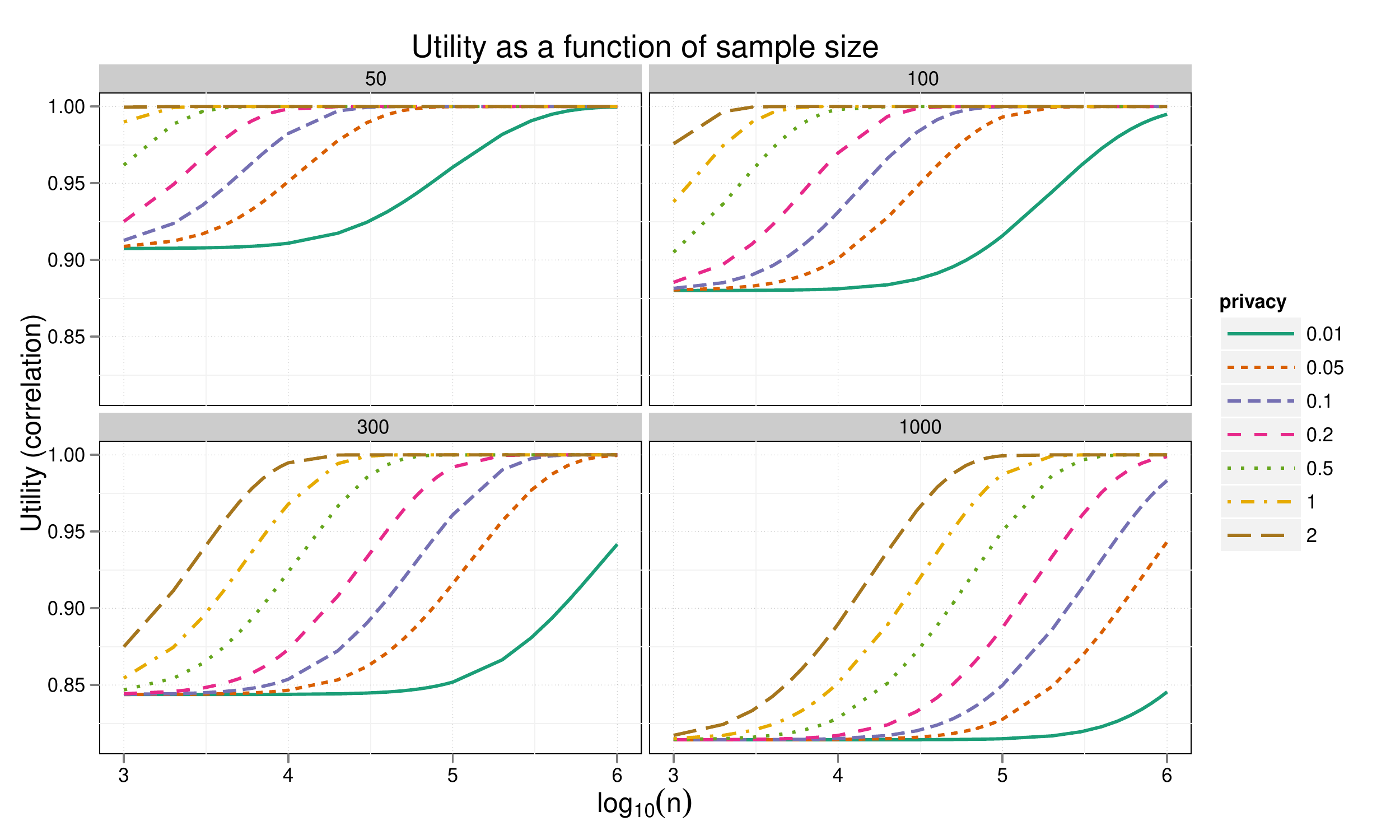}
\caption{ \small Upper bound from Theorem \ref{thm:sulq:lb} on the expected correlation between the true top eigenvector and the $\hat{v}_1$ produced by \modsulq.  The horizontal axis is $\log_{10}(n)$ and the vertical axis shows the lower bound in \eqref{eq:sulq:utilbnd}. The four panels correspond to different values of the dimension $d$, from $50$ to $1000$.  Each panel contains plots of the bound for different values of $\priveps$.
\label{fig:sulqLBpts}}
\end{figure}

The minimization over $\phi$ in \eqref{eq:sulq:utilbnd} does not lead to analytically pretty results, so we plotted the results in Figure \ref{fig:sulqLBpts} in order to get a sense of the bounds.  Figure \ref{fig:sulqLBpts} shows the lower bound on the expected correlation $\expe{|\ip{\hat{v}_1}{v_1}|}$ as a function of the number of data points (given on a logarithmic scale).  Each panel shows a different dimension, from $d = 50$ to $d = 1000$, and plots are given for different values of $\priveps$ ranging from $0.01$ to $2$.  In all experiments we set $\delta = 0.01$.  In high dimension, the lower bound shows that the expected performance of \modsulq\ is poor when there are a small number of data points.  This limitation may be particularly acute when the data lies in a very low dimensional subspace but is presented in very high dimension.  In such ``sparse'' settings, perturbing the input as in \modsulq\ is not a good approach.  However, in lower dimensions and data-rich regimes, the performance may be more favorable.

A little calculation yields the sample complexity bound in Theorem \ref{thm:lowersulq}

\begin{proof}
Suppose $\E\left[ |\ip{\hat{v_1}}{v_1}| \right] = \rho$.  Then a little algebra shows
	\begin{align*}
	2 \sqrt{1 - \rho} \ge \min_{\phi \in \Phi} \sqrt{1 - \phi} 
		\left(1 - \frac{ 1/\beta^2 + \log 2 }{ (d-1) \log \frac{1}{\sqrt{1 - \phi^2}} - \log(8)} \right).
	\end{align*}
Setting $\phi$ such that $(d-1) \log \frac{1}{\sqrt{1 - \phi^2}} - \log(8) = 2(1/\beta^2 + \log 2)$
we have
	\begin{align*}
	4 \sqrt{1 - \rho} \ge \sqrt{1 - \phi}.
	\end{align*}
Since we are concerned with the scaling behavior for large $d$ and $n$, this implies
	\begin{align*}
	\log \frac{1}{\sqrt{1 - \phi^2}} = \Theta\left( \frac{1}{\beta^2 d} \right),
	\end{align*}
so
	\begin{align*}
	\phi &= \sqrt{ 1 -  \exp\left( - \Theta\left( \frac{1}{\beta^2 d} \right) \right) } \\
	&= \Theta\left( \sqrt{ \frac{1}{\beta^2 d} } \right).
	\end{align*}
From Algorithm \ref{alg:modified_sulq}, to get for some constant $c_1$, we have the following lower bound on $\beta$:
	\begin{align*}
	\beta^2 > c_1 \frac{d^2}{n^2 \priveps^2} \log(d/\delta).
	\end{align*}
Substituting, we get for some constants $c_2$ and $c_3$ that 
	\begin{align*}
	\left( 1 - c_2 (1 - \rho) \right) \le c_3 \frac{ n^2 \priveps^2 }{d^3 \log (d/\delta)}.
	\end{align*}
Now solving for $n$ shows
	\begin{align*}
	n \ge c \frac{ d^{3/2} \sqrt{\log(d/\delta)} }{ \priveps} \left( 1 - c' (1 - \rho) \right).
	\end{align*}
\end{proof}

\section{Experiments}

We next turn to validating our theoretical results on real data. We implemented \modsulq\ and \ouralgorithm\ in order to test our theoretical bounds.  Implementing \ouralgorithm\ involved using a Gibbs sampling procedure \citep{Hoff:09bingham}.  A crucial parameter in MCMC procedures is the burn-in time, which is how long the chain must be run for it to reach its stationary distribution.  Theoretically, chains reach their stationary distribution only in the limit; however, in practice MCMC users must sample after some finite time.  In order to use this procedure appropriately, we determined a burn-in time using our data sets.  The interaction of MCMC procedures and differential privacy is a rich area for future research.

\subsection{Data and preprocessing}

We report on the performance of our algorithm on some real datasets. We chose four datasets from four different domains---\texttt{kddcup99}~\citep{uciadult}, which includes features of $494{,}021$ network connections, \texttt{census}~\citep{uciadult}, a demographic data set on $199{,}523$ individuals, \texttt{localization}~\citep{localization}, a medical dataset with $164{,}860$ instances of sensor readings on individuals engaged in different activities, and \texttt{insurance}~\citep{insurance}, a dataset on product usage and demographics of $9{,}822$ individuals.  

These datasets contain a mix of continuous and categorical features. We preprocessed each dataset by converting a feature with $q$ discrete values to a vector in $\{0, 1\}^q$; after preprocessing, the datasets \texttt{kddcup99}, \texttt{census}, \texttt{localization} and \texttt{insurance} have dimensions $116$, $513$, $44$ and $150$ respectively. We also normalized each row so that each entry has maximum value $1$, and normalize each column such that the maximum (Euclidean) column norm is $1$.  We choose $k = 4$ for \texttt{kddcup}, $k=8$ for \texttt{census}, $k = 10$ for \texttt{localization} and $k=11$ for \texttt{insurance}; in each case, the utility $\qnorm(V_k)$ of the top-$k$ PCA subspace of the data matrix accounts for at least $80\%$ of $\frobnorm{A}$. Thus, all four datasets, although fairly high dimensional, have good low-dimensional representations. The properties of each dataset are summarized in Table~\ref{table:datasetsummary}. 

\begin{table*}\label{table:datasetsummary}
\centering
\begin{tabular}{l|c|c|c|c|c}
\textbf{Dataset} & \textbf{\#instances} & \textbf{\#dimensions} & $k$ & $\qnorm(V_k)$ & $\qnorm(V_k)/\frobnorm{A}$ \\ \hline \hline
\texttt{kddcup} & 494,021 & 116 & 4 & 0.6587 & 0.96 \\ \hline
\texttt{census} & 199,523 & 513 & 8 & 0.7321 & 0.81 \\ \hline
\texttt{localization} & 164,860 & 44 & 10 & 0.5672  & 0.81 \\ \hline
\texttt{insurance} & 9,822 & 150 & 11 & 0.5118  & 0.81 
\end{tabular}
\caption{Parameters of each dataset. The second column is the number of dimensions after preprocessing. $k$ is the dimensionality of the PCA, the third column contains $\qnorm(V_k)$, where $V_k$ is the top-$k$ PCA subspace, and the fifth column is the normalized utility $\qnorm(V_k)/\frobnorm{A}$.}
\end{table*}

\subsection{Implementation of Gibbs sampling \label{sec:gibbs}}

The theoretical analysis of \ouralgorithm\ uses properties of the Bingham distribution $\mathsf{BMF}_k(\cdot)$ given in \eqref{eq:bmf:density}.  To implement this algorithm for experiments we use a Gibbs sampler due to \citet{Hoff:09bingham}.  The Gibbs sampling scheme induces a Markov Chain, the stationary distribution of which is the density in \eqref{eq:bmf:density}.  Gibbs sampling and other MCMC procedures are widely used in statistics, scientific modeling, and machine learning to estimate properties of complex distributions \cite{Brooks1998}.

Finding the speed of convergence of MCMC methods is still an open area of research.  There has been much theoretical work on estimating convegence times~\citep{JonesH:04burnin,Douc2004,Jones2001,Roberts1999,Roberts2001,Roberts1999,Roberts2001,Rosenthal1995,Kolassa1999,Kolassa2000}, but unfortunately, most theoretical guarantees are available only in special cases and are often too weak for practical use.  In lieu of theoretical guarantees, users of MCMC methods empirically estimate the {\em burn-in time}, or the number of iterations after which the chain is sufficiently close to its stationary distribution.  Statisticians employ a range of diagnostic methods and statistical tests to empirically determine if the Markov chain is close to stationarity \citep{Cowles1996,Brooks1998a,Brooks1998b,Eladlouni2006}.  These tests do not provide a sufficient guarantee of stationarity, and there is no ``best test'' to use.  In practice, the convergence of derived statistics is used to estimate an appropriate the burn-in time.  In the case of the Bingham distribution, \citet{Hoff:09bingham} performs qualitative measures of convergence.  Developing a better characterization of the convergence of this Gibbs sampler is an important question for future work.

Because the MCMC procedure of \citet{Hoff:09bingham} does not come with convergence-time guarantees, for our experiments we had to choose an appropriate burn-in time.  The 	``ideal'' execution of \ouralgorithm\ provides $\priveps$-differential privacy, but because our implementation only approximates sampling from the Bingham distribution, we cannot guarantee that this implementation provides the privacy guarantee.  As noted by \citet{mironov-CCS12}, even current implementations of floating-point arithmetic may suffer from privacy problems, so there is still significant work to do between theory and implementation.  For this paper we tried to find a burn-in time that was sufficiently long so that we could be confident that the empirical performance of \ouralgorithm\ was not affected by the initial conditions of the sampler.

In order to choose an appropriate burn-in time, we examined the {\em{time series trace}} of the Markov Chain. We ran $l$ copies, or traces, of the chain, starting from $l$ different initial locations drawn uniformly from the set of all $d \times k$ matrices with orthonormal columns.  Let $X^{i}(t)$ be the output of the $i$-th copy at iteration $t$, and let $U$ be the top-$k$ PCA subspace of the data. We used the following statistic as a function of iteration $T$:
\begin{align*}
F_k^{i}(T) = \frac{1}{\sqrt{k}} \norm{\frac{1}{T} \sum_{t=1}^{T} X^{i}(t)}_F, 
\end{align*}
where $||\cdot||_F$ is the Frobenius norm. The matrix Bingham distribution has mean $0$, and hence with increasing $T$, the statistic $F_k^{i}(T)$ should converge to $0$. 

\begin{figure}[ht]
\centering
\subfigure[\texttt{kddcup} ($k = 4$)]{
\includegraphics[width=2.8in]{./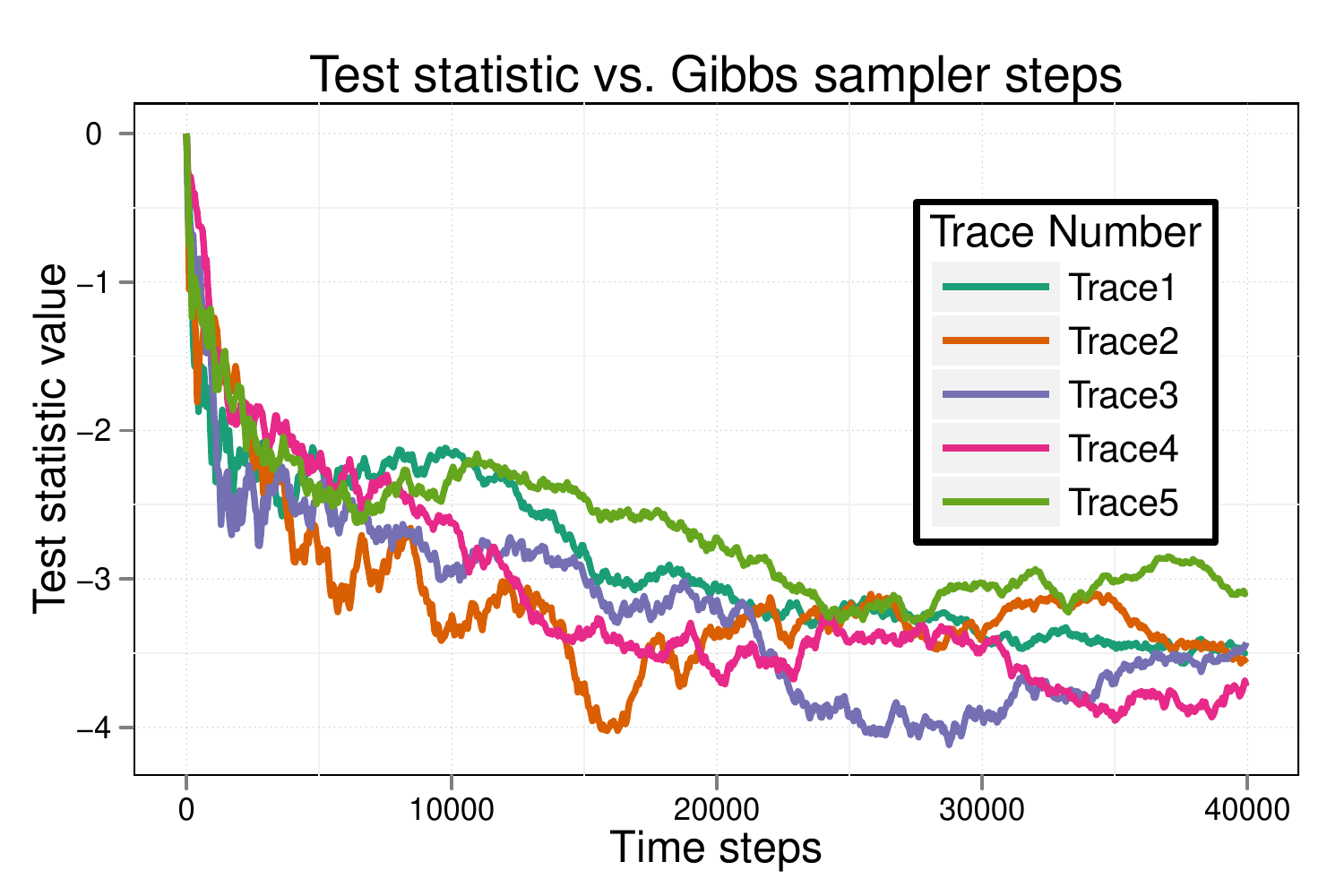}
\label{fig:Fkddcup}
}
\subfigure[\texttt{insurance} ($k = 11$)]{
\includegraphics[width=2.8in]{./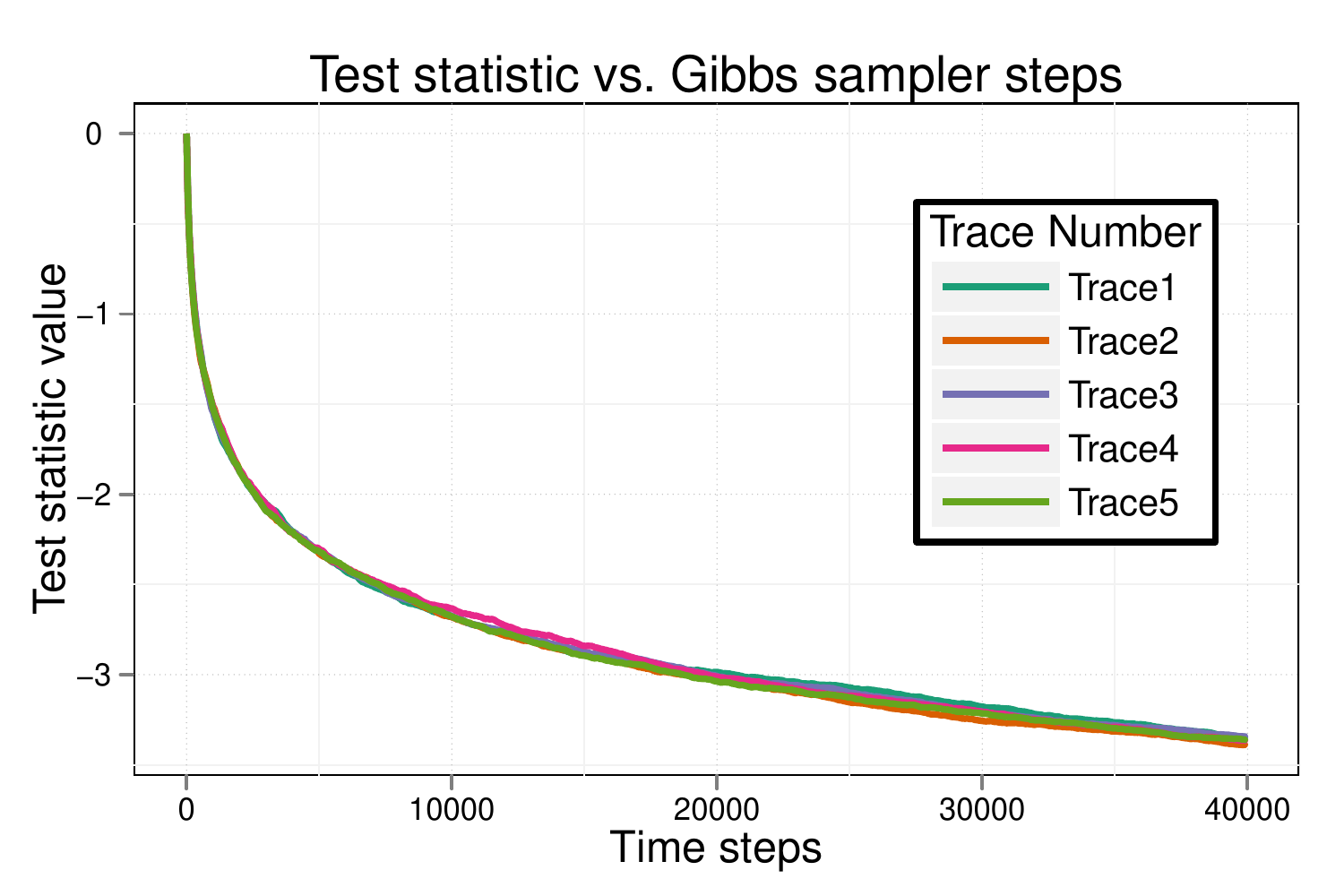}
\label{fig:Finsurance}
}
\caption{\small Plots of $\log F_k^{i}(T)$ for five different traces (values of $i$) on two different data sets.  Figure \ref{fig:Fkddcup} shows $\log F_k^{i}(T)$ for for $k=4$ as a function of iteration $T$ for $40,000$ steps of the Gibbs sampler on the \texttt{kddcup} dataset.  Figure \ref{fig:Finsurance} shows the same for the \texttt{insurance} dataset. \label{fig:MCMCtrace}}
\end{figure}

Figure \ref{fig:MCMCtrace} illustrates the behavior of the Gibbs sampler.  The plots show the value of $\log F_k^{i}(T)$ as a function of the Markov chain iteration for 5 different restarts of the MCMC procedure for two data sets, \texttt{kddcup} and \texttt{insurance}.  The initial starting points were chosen uniformly from the set of all $d \times k$ matrices with orthonormal columns.
The plots show that $F_k^{i}(T)$ decreases rapidly after a few thousand iterations, and is less than $0.01$ after $T = 20{,}000$ in both cases. $\log F_k^{i}(T)$ also appears to have a larger variance for \texttt{kddcup} than for \texttt{insurance}; this is explained by the fact that the \texttt{kddcup} dataset has a much larger number of samples, which makes its stationary distribution farther from the initial distribution of the
sampler.
Based on these and other simulations, we observed that the Gibbs sampler converges to $F_k(t) < 0.01$ at $t = 20{,}000$ when run on data with a few hundred dimensions and with $k$ between $5$ and $10$; we thus chose to run the Gibbs sampler for $T = 20{,}000$ timesteps for all the datasets. 

Our simulations indicate that the chains converge fairly rapidly, particularly when $\frobnorm{A - A_k}$ is small so that $A_k$ is a good approximation to $A$.  Convergence is slower for larger $n$ when the initial state is chosen from the uniform distribution over all $k \times d$ matrices with orthonormal columns; this is explained by the fact that for larger $n$, the stationary distribution is farther in variation distance from the starting distribution, which results in a longer convergence time.

\subsection{Scaling with data set size}

We ran three algorithms on these data sets : standard (non-private) PCA, \modsulq, and \ouralgorithm.  As a sanity check, we also tried a uniformly generated random projection---since this projection is data-independent we would expect it to have low utility.  We measured the utility $\qnorm(U)$, where $U$ is the $k$-dimensional subspace output by the algorithm; $\qnorm(U)$ is maximized when $U$ is the top-$k$ PCA subspace, and thus this reflects how close the output subspace is to the true PCA subspace in terms of representing the data.  Although our theoretical results hold for $\qangle(\cdot)$, the ``energy'' $\qnorm(\cdot)$ is more relevant in practice for larger $k$.

To investigate how well these different algorithms performed on real data, for each data set we subsampled  data sets of different sizes $n$ uniformly and ran the algorithms on the subsets.  We chose $\priveps = 0.1$ for this experiment, and for \modsulq\ we used $\delta=0.01$.  We averaged over $5$ such subsets and over several instances of the randomized algorithms ($10$ restarts for \ouralgorithm\ and $100$ for \modsulq\ and random projections).  For each subset and instance we calculated the resulting utility $\qnorm(\cdot)$ of the output subspace.  

Figures \ref{fig:utilitycensus}, \ref{fig:utilitykddcup}, \ref{fig:utilitylocalization}, and \ref{fig:utilityinsurance} show $\qnorm(U)$ as a function of the subsampled data set sizes.  The bars indicate the standard deviation over the restarts (from subsampling the data and random sampling for privacy).  The non-private algorithm achieved $\qnorm(V_k)$ for nearly all subset sizes (see Table \ref{table:datasetsummary} for the values).  These plots illustrate how additional data can improve the utility of the output for a fixed privacy level $\priveps$. As $n$ increases, the dashed blue line indicating the utility of \ouralgorithm\ begins to approach $\qnorm(V_k)$, the utility of the optimal subspace. 

These experiments also show that the performance of \ouralgorithm\ is significantly better than that of \modsulq, and \modsulq\ produces subspaces whose utility is on par with randomly choosing a subspace.
The only exception to this latter point is {\texttt{localization}},  We believe this is because $d$ is much lower for this data set ($d = 44$), which shows that for low dimension and large $n$, \modsulq\ may produce subspaces with reasonable utility.  Furthermore, \modsulq\ is simpler and hence runs faster than \ouralgorithm, which requires running the Gibbs sampler past the burn-in time.   Our theoretical results suggest that the performance of differentially private PCA cannot be significantly improved over the performance of \ouralgorithm\, but since those results hold for $k = 1$ they do not immediately apply here.

\begin{figure}[ht]
\centering
\subfigure[\texttt{census} ($k = 8$)]{
\includegraphics[scale=0.38]{./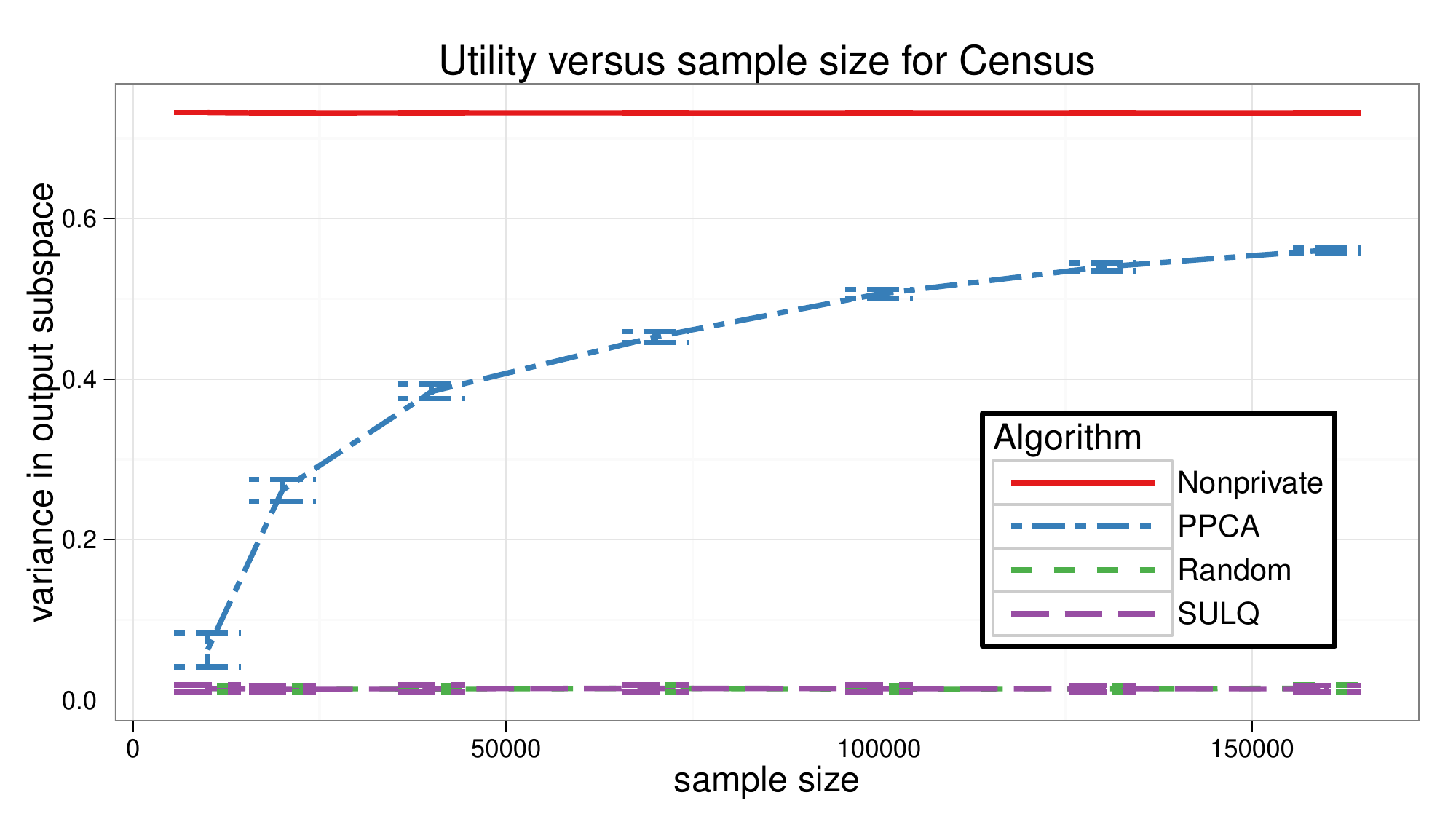}
\label{fig:utilitycensus}
}%
\subfigure[\texttt{kddcup} ($k = 4$)]{
\includegraphics[scale=0.38]{./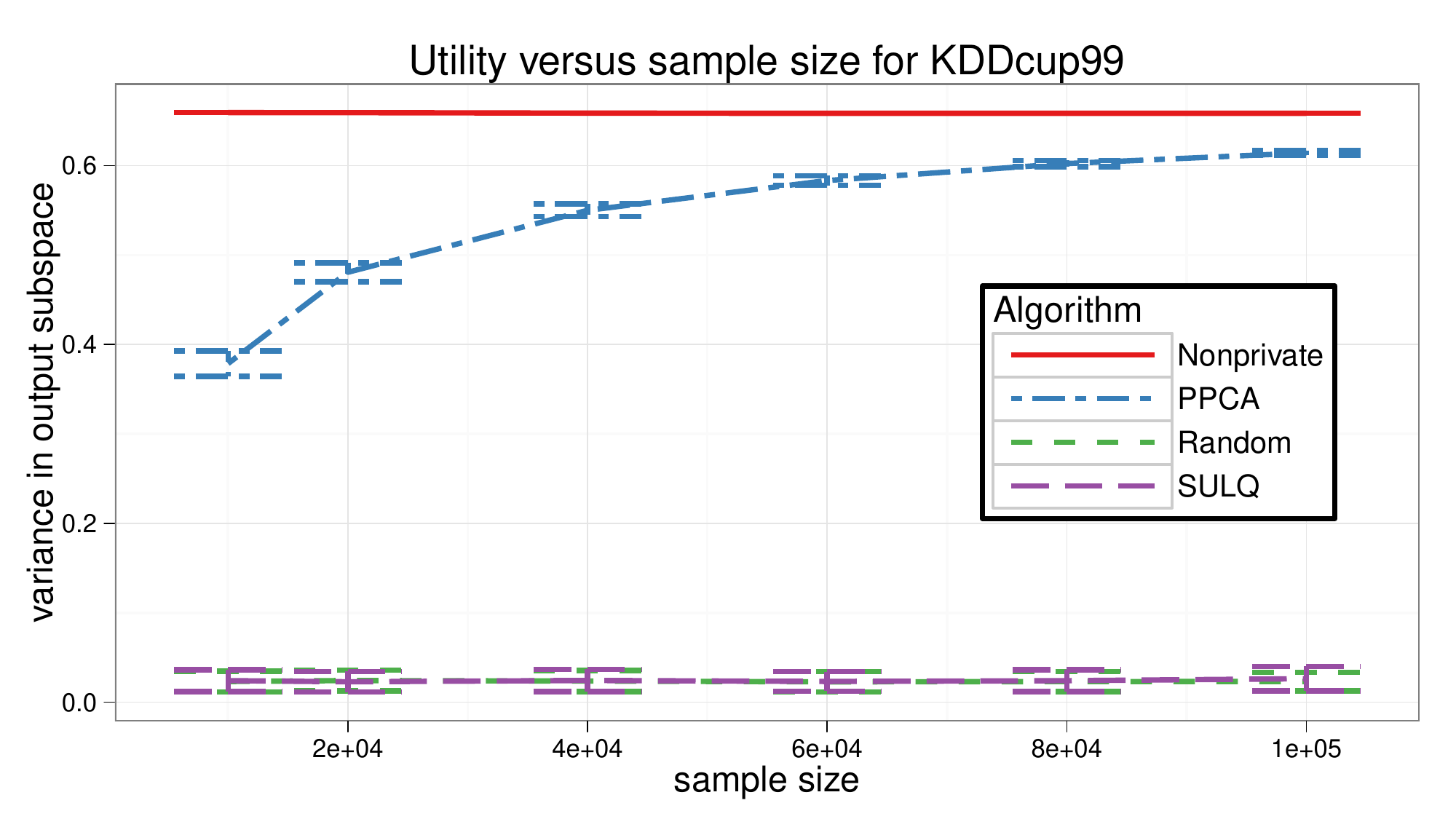}
\label{fig:utilitykddcup}
}
\caption{\small Plot of the unnormalized utility $\qnorm(U)$ versus the sample size $n$, averaged over random subsets of the data and randomness in the algorithms.  The bars are at one standard deviation about the mean.  The top red line is the PCA algorithm without privacy constraints.  The dashed line in blue is the utility for \ouralgorithm.  The green and purple dashed lines are nearly indistinguishable and represent the utility from  random projections and \modsulq, respectively.  In these plots $\priveps = 0.1$ and $\delta = 0.01$.  \label{fig:utilityplots} }
\end{figure}

\begin{figure}[ht]
\centering
\subfigure[\texttt{localization} ($k = 10$)]{
\includegraphics[scale=0.38]{./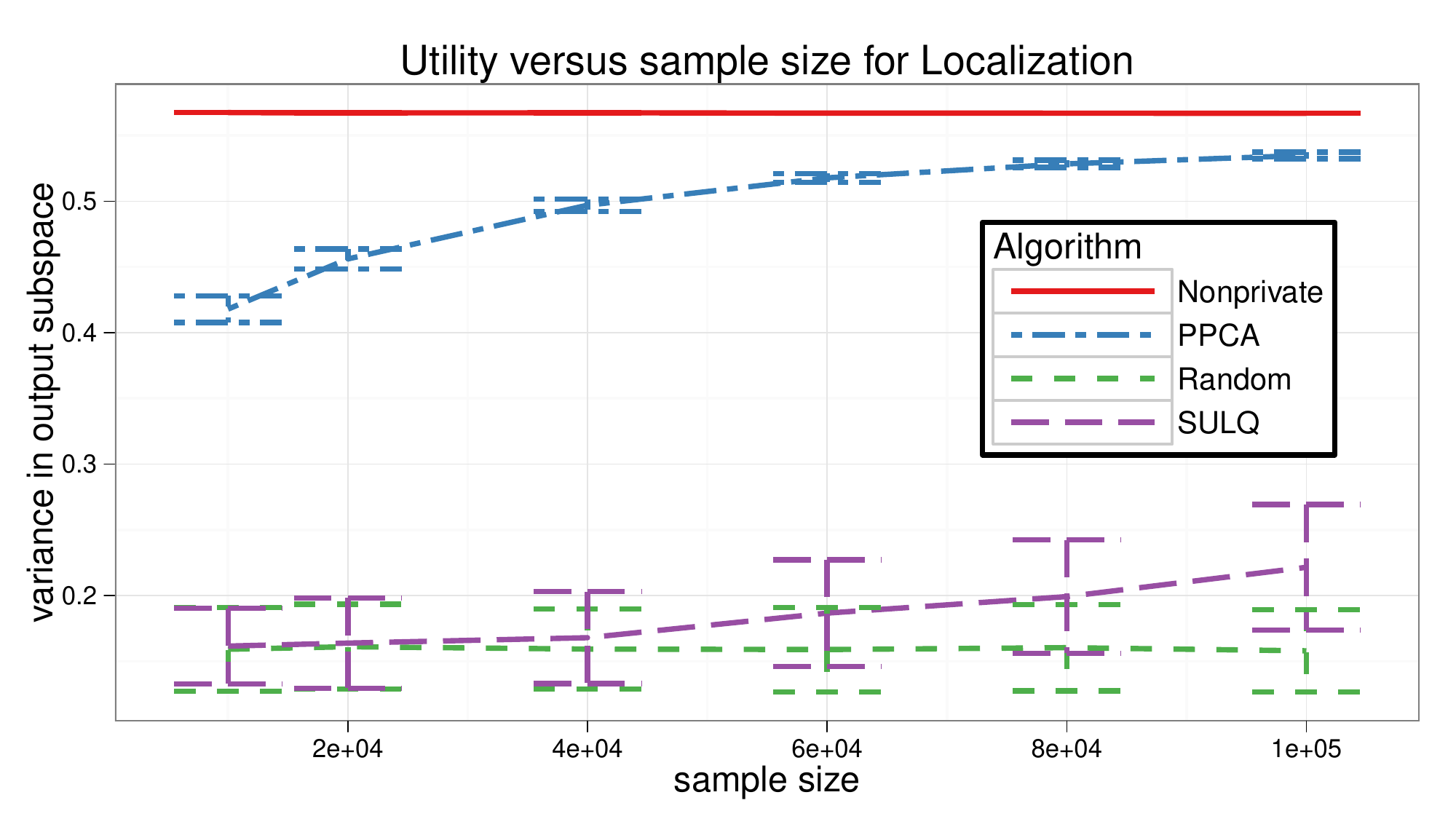}
\label{fig:utilitylocalization}
}%
\subfigure[\texttt{insurance} ($k = 11$)]{
\includegraphics[scale=0.38]{./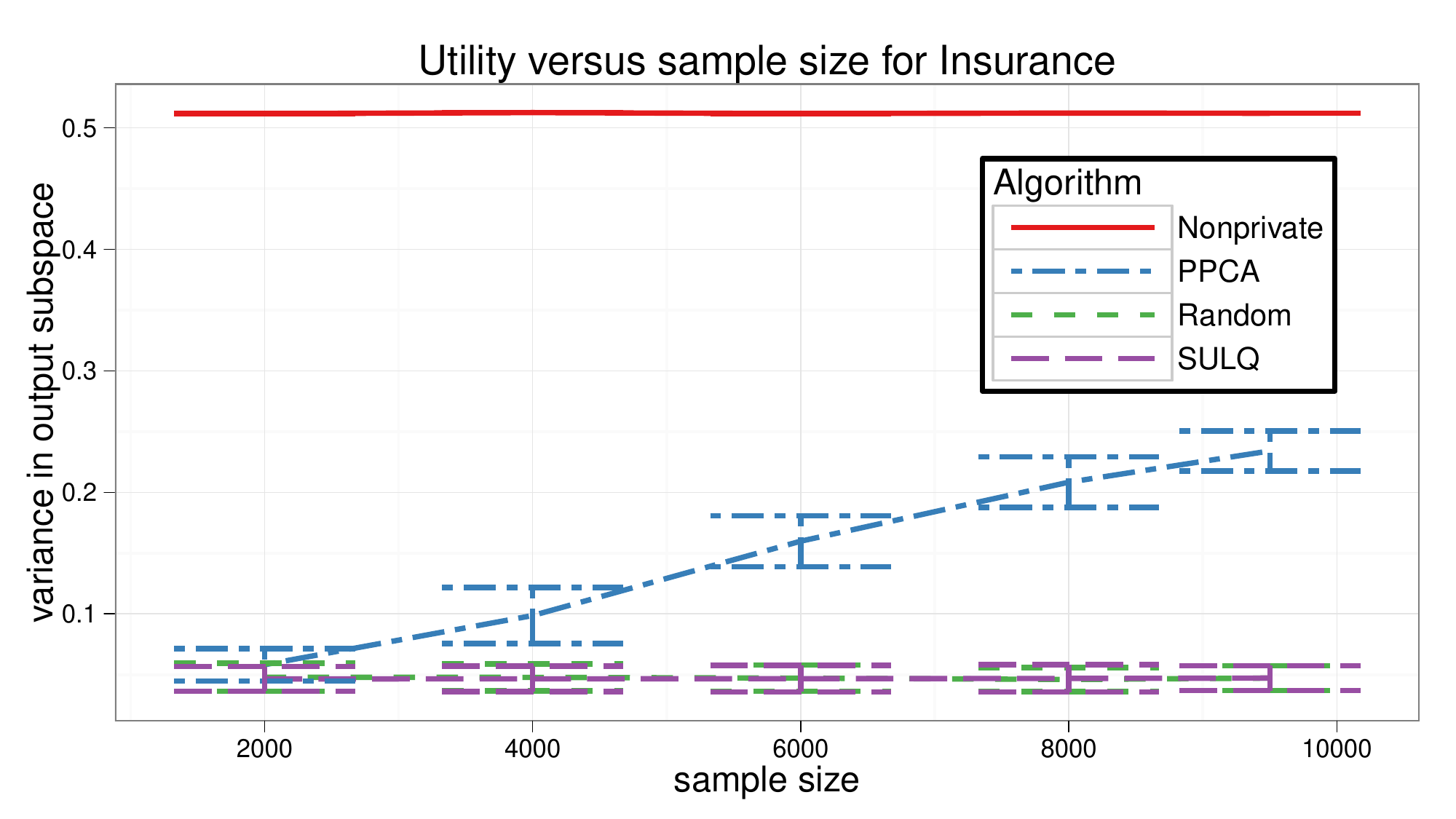}
\label{fig:utilityinsurance}
}
\caption{\small Plot of the unnormalized utility $\qnorm(U)$ versus the sample size $n$, averaged over random subsets of the data and randomness in the algorithms.  The bars are at one standard deviation about the mean.  The top red line is the PCA algorithm without privacy constraints.  The dashed line in blue is the utility for \ouralgorithm.  The green and purple dashed lines are nearly indistinguishable for \texttt{insurance} but diverge for \texttt{localization}---they represent the utility from random projections and \modsulq, respectively.  In these plots $\priveps = 0.1$ and $\delta = 0.01$.   \label{fig:utilityplots2} }
\end{figure}

\subsection{Effect of privacy on classification}

 A common use of a dimension reduction algorithm is as a precursor to classification or clustering; to evaluate the effectiveness of the different algorithms, we projected the data onto the subspace output by the algorithms, and measured the classification accuracy using the projected data.  The classification results are summarized in Table~\ref{table:classification}. We chose the {\em{normal}} vs. all classification task in \texttt{kddcup99}, and the {\em{falling}} vs. all classification task in \texttt{localization}. \footnote{For the other two datasets, \texttt{census} and \texttt{insurance}, the classification accuracy of linear SVM after (non-private) PCAs is as low as always predicting the majority label.}   We used a linear SVM for all classification tasks, which is implemented by libSVM~\citep{CC01a}. 

For the classification experiments, we used half of the data as a holdout set for computing a projection subspace.
 We projected the classification data onto the subspace computed based on the holdout set; $10\%$ of this data was used for training and parameter-tuning, and the rest for testing. We repeated the classification process $5$ times for $5$ different (random) projections for each algorithm, and then ran the entire procedure 
over $5$ random permutations of the data. Each value in the figure is thus an average over $5 \times 5 = 25$ rounds of classification.

\begin{table*}\label{table:classification}
\centering
\begin{tabular}{l|c|c}
& \textbf{\texttt{kddcup99}} & \textbf{\texttt{localization}} \\ \hline \hline
Non-private PCA & $98.97 \pm 0.05$ & $100 \pm 0$ \\ \hline
\ouralgorithm & $98.95\pm 0.05$ & $100 \pm 0$ \\ \hline
\modsulq\ & $98.18\pm 0.65$& $97.06\pm 2.17$ \\ \hline
Random Projections & $98.23\pm 0.49$ & $96.28 \pm 2.34$ 
\end{tabular}
\caption{Classification accuracy in the $k$-dimensional subspaces for \texttt{kddcup99} ($k=4$), and \texttt{localization} ($k=10$) in the $k$-dimensional subspaces reported by the different algorithms.}
\end{table*}

The classification results show that our algorithm performs almost as well as non-private PCA for classification in the top-$k$ PCA subspace, while the performance of \modsulq\ and random projections are a little worse. The classification accuracy while using \modsulq\ and random projections also appears to have higher variance compared to our algorithm and non-private PCA.  This is because the projections tend to be farther from the top-$k$ PCA subspace, making the classification error more variable.

\subsection{Effect of the privacy requirement}

How to choose $\priveps$ is important open question for many applications.  We wanted to understand the impact of varying $\priveps$ on the utility of the subspace.  We did this via a synthetic data set---we generated $n = 5{,}000$ points drawn from a Gaussian distribution in $d = 10$ with mean $\mbf{0}$ and covariance matrix with eigenvalues 
\begin{align}
\{ 0.5, 0.30, 0.04, 0.03, 0.02, 0.01,0.004, 0.003, 0.001, 0.001 \}.
\label{eq:synthdata}
\end{align}
In this case the space spanned by the top two eigenvectors has most of the energy, so we chose $k = 2$ and plotted the utility $\qnorm(\cdot)$ for non-private PCA, \modsulq\ with $\delta = 0.05$, and \ouralgorithm\ with a burn-in time of $T = 1000$.  
We drew 100 samples from each privacy-preserving algorithm and the plot of the average utility versus $\priveps$ is shown in Figure \ref{fig:utilalpha}.  The privacy requirement relaxes as $\priveps$ increases, and both \modsulq\ and \ouralgorithm\ approach the utility of PCA without privacy constraints.  However, for moderate $\priveps$ \ouralgorithm\ still captures most of the utility, whereas the gap between \modsulq\ and \ouralgorithm\ becomes quite large.

\begin{figure}[ht]
\centering
\includegraphics[width=5in]{./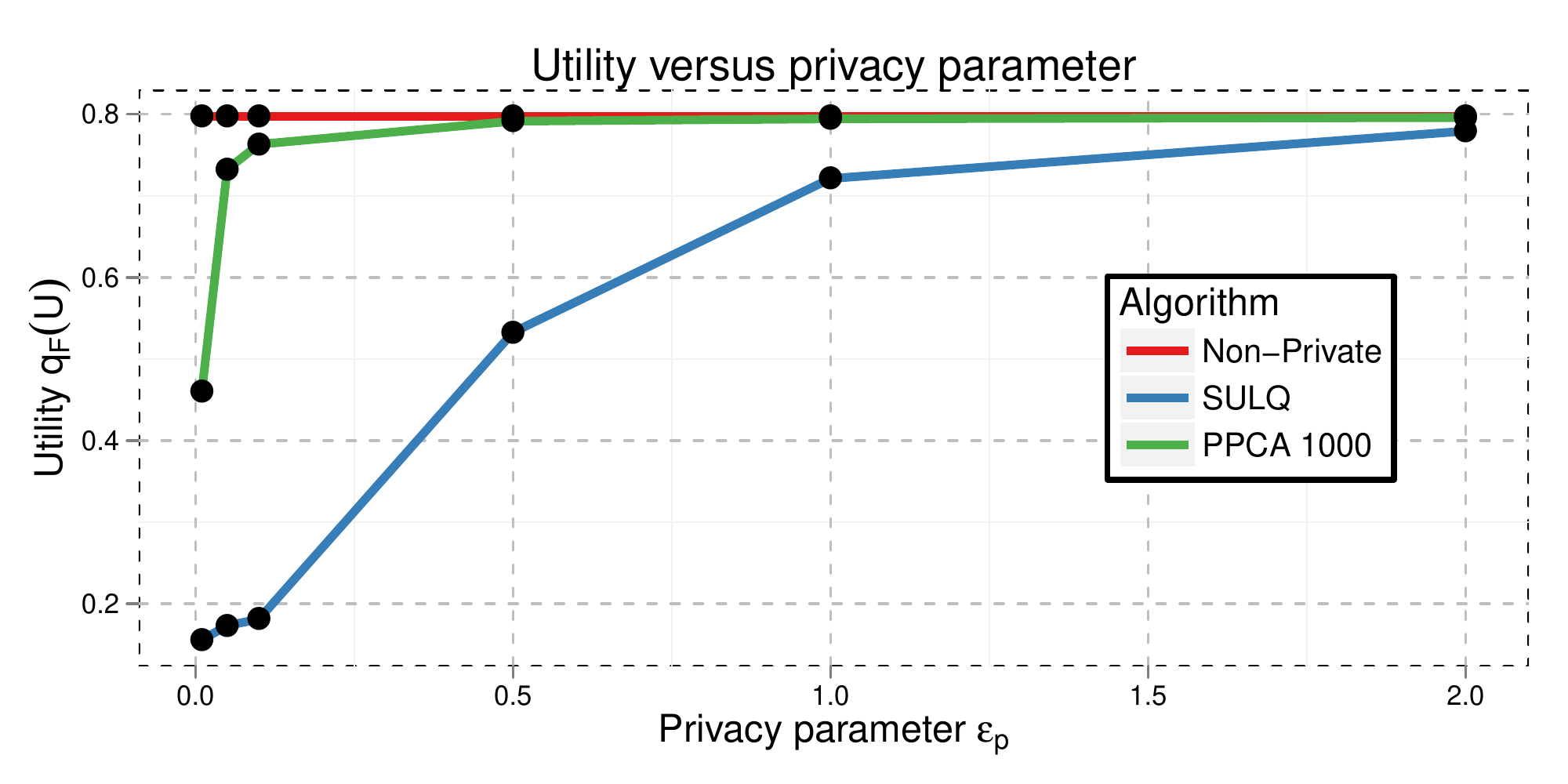}
\caption{Plot of $\qnorm(U)$ versus $\priveps$ for a synthetic data set with $n = 5{,}000$, $d = 10$, and $k = 2$.  The data has a Gaussian distribution whose covariance matrix has eigenvalues given by \eqref{eq:synthdata}.}
\label{fig:utilalpha}
\end{figure}

\section{Conclusion}

In this paper we investigated the theoretical and empirical performance of differentially private approximations to PCA.  Empirically, we showed that \modsulq\ and \ouralgorithm\ differ markedly in how well they approximate the top-$k$ subspace of the data.  The reason for this, theoretically, is that the sample complexity of \modsulq\ scales as $d^{3/2} \sqrt{\log d}$ whereas \ouralgorithm\ scales as $d$.  Because \ouralgorithm\ uses the exponential mechanism with $\qnorm(\cdot)$ as the utility function, it is not surprising that it performs well.  However, \modsulq\ often had a performance comparable to random projections, indicating that the real data sets had too few points for it to be effective.  We furthermore showed that \ouralgorithm\ is nearly optimal, in that any differentially private approximation to PCA must use $\Omega(d)$ samples.  

Our investigation brought up many interesting issues to consider for future work.  The description of differentially private algorithms assume an ideal model of computation : real systems require additional security assumptions that have to be verified.  The difference between truly random noise and pseudorandomness and the effects of finite precision can lead to a gap between the theoretical ideal and practice.  Numerical optimization methods used in some privacy methods~\citep{ChaudhuriMS:11erm} can only produce approximate solutions; they may also have complex termination conditions unaccounted for in the theoretical analysis.  MCMC sampling is similar : if we can guarantee that the sampler's distribution has total variation distance $\delta$ from the Bingham distribution, then sampler can guarantee $(\priveps,\delta)$ differential privacy.   However, we do not yet have such analytical bounds on the convergence rate; we must determine the Gibbs sampler's convergence empirically.  Accounting for these effects is an interesting avenue for future work that can bring theory and practice together.

For PCA more specifically, it would be interesting to extend the sample complexity results to general $k > 1$.  For $k = 1$ the utility functions $\qnorm(\cdot)$ and $\qangle(\cdot)$ are related, but for larger $k$ it is not immediately clear what metric best captures the idea of ``approximating'' the top-$k$ PCA subspace.   For minimax lower bounds, it may be possible to construct a packing with respect to a general utility metric.  For example, \citet{KT13} use properties of packings on the Grassmann manifold.  Upper bounds on the sample complexity for \ouralgorithm\ may be possible by performing a more careful analysis of the Bingham distribution or by finding better approximations for its normalizing constant.  Developing a framework for analyzing general approximations to PCA may be of interest more broadly in machine learning.

\section*{Acknowledgements}

The authors would like to thank the reviewers for their detailed comments, which greatly improved the quality and readability of the manuscript, and the action editor, Gabor Lugosi, for his patience during the revision process.  KC and KS would like to thank NIH for research support under U54-HL108460.
The experimental results were made possible by support from the UCSD
FWGrid Project, NSF Research Infrastructure Grant Number EIA-0303622.
ADS was supported in part by the California Institute for Telecommunications and
Information Technology (CALIT2) at UC San Diego.

\nocite{HLMJ09,CM06,MKAGV08,AS00,EGS03,S02,MGKV06,LiLV10}

\appendix

\section{A packing lemma}

The proof of this lemma is relatively straightforward.  The following is a slight refinement of a lemma due to~\cite{CsiszarN:88positivity,CsiszarN:91gavc}.

\begin{lemma}
\label{lem:conc2}
Let $\mbf{Z}_1, \mbf{Z}_2, \ldots, \mbf{Z}_N$ be arbitrary random variables and let $f_i(\mbf{Z}_1, \ldots, \mbf{Z}_i)$ be arbitrary with $0 \le f_i \le 1$, $i = 1, 2, \ldots, N$.  Then the condition
	\begin{align*}
	\expe{ f_i(\mbf{Z}_1, \ldots, \mbf{Z}_i) | \mbf{Z}_1,\ldots, \mbf{Z}_{i-1} }
		\le a_i\ \mathrm{a.s.}, \qquad i = 1, 2, \ldots, N
	\end{align*}
implies that
	\begin{align*}
	\prob{ \frac{1}{N} \sum_{i=1}^{N} f_i(\mbf{Z}_1, \ldots, \mbf{Z}_i) > t }
	\le
	\exp\left(-N t (\log 2) + \sum_{i=1}^{N} a_i \right).
	\end{align*}
\end{lemma}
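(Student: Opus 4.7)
The plan is to run a standard Chernoff-style argument, but with base $2$ rather than base $e$ so that the convexity inequality gives the cleanest possible bound on the moment generating function. Concretely, I will upper bound $\mathbb{E}\bigl[ 2^{\sum_i f_i} \bigr]$ by unfolding it via iterated conditional expectation, and then apply Markov's inequality to the random variable $2^{\sum_i f_i}$.

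First I would record the key one-line convexity fact: since $x \mapsto 2^x$ is convex on $[0,1]$, the chord from $(0,1)$ to $(1,2)$ lies above the graph, giving $2^x \le 1 + x$ for $x \in [0,1]$. Applying this to $f_i$ (which is in $[0,1]$ by hypothesis), and then using $1 + a \le e^a$, I obtain
\begin{align*}
\mathbb{E}\bigl[ 2^{f_i(\mathbf{Z}_1,\ldots,\mathbf{Z}_i)} \,\big|\, \mathbf{Z}_1,\ldots,\mathbf{Z}_{i-1} \bigr]
\le 1 + \mathbb{E}\bigl[ f_i \,\big|\, \mathbf{Z}_1,\ldots,\mathbf{Z}_{i-1} \bigr]
\le 1 + a_i \le e^{a_i}\ \text{a.s.}
\end{align*}

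Next I would peel off the product $\prod_{i=1}^{N} 2^{f_i}$ one factor at a time using the tower property. Conditioning on $\mathbf{Z}_1,\ldots,\mathbf{Z}_{N-1}$, the factors $2^{f_1},\ldots,2^{f_{N-1}}$ are measurable, so they pull out, and the remaining conditional expectation of $2^{f_N}$ is bounded by $e^{a_N}$ almost surely. Iterating this $N$ times yields
\begin{align*}
\mathbb{E}\Bigl[ 2^{\sum_{i=1}^{N} f_i(\mathbf{Z}_1,\ldots,\mathbf{Z}_i)} \Bigr]
\le \exp\!\Bigl( \sum_{i=1}^{N} a_i \Bigr).
\end{align*}

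Finally, Markov's inequality on the nonnegative random variable $2^{\sum_i f_i}$ at threshold $2^{Nt}$ gives
\begin{align*}
\mathbb{P}\!\left( \frac{1}{N} \sum_{i=1}^{N} f_i > t \right)
= \mathbb{P}\!\left( 2^{\sum_i f_i} > 2^{Nt} \right)
\le 2^{-Nt}\, \mathbb{E}\Bigl[ 2^{\sum_i f_i} \Bigr]
\le \exp\!\Bigl( -N t \log 2 + \sum_{i=1}^{N} a_i \Bigr),
\end{align*}
which is exactly the claim. There is no real obstacle here; the only subtlety is choosing base $2$ rather than $e$ so that the chord bound $2^x \le 1+x$ is tight at both endpoints of $[0,1]$, producing the clean $\log 2$ factor in the exponent rather than a suboptimal constant that a naive Hoeffding-style argument would give.
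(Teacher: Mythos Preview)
Your proof is correct and is essentially identical to the paper's own argument: both apply Markov's inequality to $2^{\sum_i f_i}$, use the elementary bound $2^x \le 1 + x \le e^x$ on $[0,1]$ to control the conditional moment generating factors, and peel off the product via iterated conditioning. The only cosmetic difference is the order of presentation---the paper applies Markov first and then bounds the expectation, whereas you bound the expectation first and then apply Markov.
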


\begin{proof}
First apply Markov's inequality:
	\begin{align*}
	\prob{ \frac{1}{N} \sum_{i=1}^{N} f_i(\mbf{Z}_1, \ldots, \mbf{Z}_i) > t } & \\
	&\hspace{-1.5in} = \prob{ 2^{ \sum_{i=1}^{N} f_i(\mbf{Z}_1, \ldots, \mbf{Z}_i) } > 2^{Nt} } \\
	&\hspace{-1.5in} \le 2^{-Nt} \expe{ 2^{ \sum_{i=1}^{N} f_i(\mbf{Z}_1, \ldots, \mbf{Z}_i) } } \\
	&\hspace{-1.5in} \le 2^{-Nt} \expe{ 2^{ \sum_{i=1}^{N-1} f_i(\mbf{Z}_1, \ldots, \mbf{Z}_i) }
		\expe{ 2^{ f_N(\mbf{Z}_1, \ldots, \mbf{Z}_N) } | \mbf{Z}_1, \ldots, \mbf{Z}_{N-1} }  }.
	\end{align*}
Now note that for $b \in [0,1]$ we have $2^b \le 1 + b \le e^{b}$, so
	\begin{align*}
	\expe{ 2^{ f_N(\mbf{Z}_1, \ldots, \mbf{Z}_N) } | \mbf{Z}_1, \ldots, \mbf{Z}_{N-1} }
	&\le \expe{ 1 + f_N(\mbf{Z}_1, \ldots, \mbf{Z}_N) | \mbf{Z}_1, \ldots, \mbf{Z}_{N-1} } \\
	&\le (1 + a_N) \\
	&\le \exp(a_N).
	\end{align*}
Therefore
	\begin{align*}
	\prob{ \frac{1}{N} \sum_{i=1}^{N} f_i(\mbf{Z}_1, \ldots, \mbf{Z}_i) > t }
	\le \exp(-Nt (\log 2) + a_N)
		\expe{2^{ \sum_{i=1}^{N-1} f_i(\mbf{Z}_1, \ldots, \mbf{Z}_i) } }.
	\end{align*}
Continuing in the same way yields
	\begin{align*}
	\prob{ \frac{1}{N} \sum_{i=1}^{N} f_i(\mbf{Z}_1, \ldots, \mbf{Z}_i) > t }
	\le \exp\left(-N t (\log 2) + \sum_{i=1}^{N} a_i \right).
	\end{align*}
\end{proof}
The second technical lemma~\citep[Lemma 2]{CsiszarN:91gavc} is a basic result about the distribution of inner product between a randomly chosen unit vector and any other fixed vector.  It is a consequence of a result of Shannon~\citep{Shannon:59gaussian} on the distribution of the angle between a uniformly distributed unit vector and a fixed unit vector.

\begin{lemma}[Lemma 2 of~\cite{CsiszarN:91gavc}]
\label{lem:expangle}
Let $\mbf{U}$ be uniformly distributed on the unit sphere $\Sp^{d-1}$ in $\R^d$.  Then for every unit vector $\mbf{u}$ on this sphere and any $\phi \in [ (2 \pi d)^{-1/2}, 1)$, the following inequality holds:
	\begin{align*}
	\prob{ \ip{\mbf{U}}{\mbf{u}} \ge \phi } \le (1 - \phi^2)^{(d-1)/2}.
	\end{align*}
\end{lemma}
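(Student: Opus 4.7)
The plan is to reduce the claim to a one-dimensional integral over the polar angle and then bound numerator and denominator separately. First, by the rotational invariance of the uniform measure on $\Sp^{d-1}$ I may assume $\mbf{u} = e_1$, so that $\ip{\mbf{U}}{\mbf{u}} = U_1$. Expressing surface measure in spherical coordinates based at $e_1$, each slice $\{U_1 = \cos\theta\}$ is a $(d-2)$-sphere of radius $\sin\theta$; the $\sin^{d-2}\theta$ weights are what survive once the $(d-2)$-dimensional ``equatorial'' integration cancels between numerator and denominator, leaving
	\begin{align*}
	\prob{\ip{\mbf{U}}{\mbf{u}} \ge \phi}
	= \frac{ \int_0^{\arccos\phi} \sin^{d-2}\theta\, d\theta }{ \int_0^{\pi} \sin^{d-2}\theta\, d\theta }.
	\end{align*}

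For the numerator I would exploit that $\cos\theta \ge \phi$ throughout $[0,\arccos\phi]$. Multiplying and dividing the integrand by $\cos\theta/\phi$, which is at least $1$ on this range, and then changing variables $u = \sin\theta$ turns the integral into a polynomial one that evaluates cleanly:
	\begin{align*}
	\int_0^{\arccos\phi} \sin^{d-2}\theta\, d\theta \le \frac{1}{\phi}\int_0^{\arccos\phi} \sin^{d-2}\theta\cos\theta\, d\theta = \frac{(1-\phi^2)^{(d-1)/2}}{\phi(d-1)}.
	\end{align*}
For the denominator I would invoke the classical Wallis/beta-function identity $\int_0^{\pi}\sin^{d-2}\theta\, d\theta = \sqrt{\pi}\,\Gamma((d-1)/2)/\Gamma(d/2)$ coupled with a Wendel-type inequality for ratios of gamma functions, in order to produce a lower bound on this integral of order $\sqrt{2\pi/d}$. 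Combining, it follows that
	\begin{align*}
	\prob{\ip{\mbf{U}}{\mbf{u}} \ge \phi} \le \frac{(1-\phi^2)^{(d-1)/2}}{\phi\,\sqrt{2\pi d}},
	\end{align*}
and the hypothesis $\phi \ge (2\pi d)^{-1/2}$ makes the extra denominator factor at least $1$, yielding the claim.

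I expect the main obstacle to be the gamma-ratio estimate. A direct application of the standard Wendel inequality $\Gamma(d/2)/\Gamma((d-1)/2) \le \sqrt{(d-1)/2}$ produces only the slightly weaker threshold of order $(2\pi(d-1))^{-1/2}$, and matching the constant $(2\pi d)^{-1/2}$ stated in the lemma requires a finer Kershaw-style refinement of the ratio $\Gamma((d-1)/2)/\Gamma(d/2)$ (or equivalent asymptotics via Stirling). Every other step -- the rotational reduction, the $\cos\theta \ge \phi$ monotonicity trick on the numerator, and the Wallis identity for the denominator -- is routine.
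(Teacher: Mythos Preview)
The paper does not prove this lemma; it is quoted verbatim as Lemma~2 of Csisz\'ar--Narayan (1991) and attributed ultimately to Shannon's 1959 analysis of spherical-cap areas. So there is no ``paper's proof'' to compare against---your argument stands or falls on its own.

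Your approach is the standard one (indeed the one Shannon used): reduce to the marginal of $U_1$ via rotational invariance, express the cap probability as a ratio of $\sin^{d-2}\theta$ integrals, bound the numerator by inserting the factor $\cos\theta/\phi \ge 1$, and lower bound the denominator via the beta integral. Every step you describe is correct, and your identification of the only delicate point---the gamma-ratio constant---is accurate. Writing $(d-1)\Gamma((d-1)/2) = 2\Gamma((d+1)/2)$, the threshold you actually need is
\[
\phi \ge \frac{\Gamma(d/2)}{2\sqrt{\pi}\,\Gamma((d+1)/2)},
\]
and the plain Wendel bound $\Gamma(y)/\Gamma(y+\tfrac12) \le \sqrt{y+\tfrac12}/y$ with $y=d/2$ gives a threshold of $\sqrt{d+1}/(d\sqrt{2\pi})$, which exceeds $(2\pi d)^{-1/2}$ by the factor $\sqrt{(d+1)/d}$. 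To hit the stated constant you do need a Kershaw-type refinement (or a direct Stirling computation), exactly as you anticipated; alternatively, for the purposes of this paper the slightly weaker threshold $(2\pi(d-1))^{-1/2}$ would be harmless, since the lemma is only ever invoked for $\phi$ bounded well away from zero.
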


\begin{lemma}[Packing set on the unit sphere]
\label{lem:pack}
Let the dimension $d$ and parameter $\phi \in [ (2 \pi d)^{-1/2}, 1)$ be given.  For $N$ and $t$ satisfying
	\begin{align}
	-N t (\log 2) + N (N-1) (1 - \phi^2)^{(d-1)/2} < 0
	\label{eq:genpacksize}
	\end{align}
there exists a set of $K = \lfloor (1 - t) N \rfloor$ unit vectors $\mc{C}$ such that for all distinct pairs $\mu,\nu \in \mc{C}$,
	\begin{align}
	\left| \ip{\mu}{\nu} \right| < \phi.
	\label{eq:genpackip}
	\end{align}
\end{lemma}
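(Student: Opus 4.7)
The plan is to use the probabilistic method combined with the concentration inequality of Lemma~\ref{lem:conc2} and the spherical cap bound of Lemma~\ref{lem:expangle}. I will draw $N$ candidate unit vectors $\mathbf{Z}_1, \ldots, \mathbf{Z}_N$ i.i.d.\ from the uniform distribution on $\mathbb{S}^{d-1}$, identify the ``bad'' vectors whose inner product with some earlier vector exceeds $\phi$ in absolute value, and argue that with positive probability the number of bad vectors is at most $tN$. Discarding these bad vectors will leave a set of at least $\lfloor(1-t)N\rfloor$ vectors that pairwise satisfy \eqref{eq:genpackip}.

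Concretely, I will define, for $i = 1, 2, \ldots, N$,
\begin{align*}
f_i(\mathbf{Z}_1, \ldots, \mathbf{Z}_i) = \mathbb{1}\!\left[\exists\, j < i : |\langle \mathbf{Z}_i, \mathbf{Z}_j \rangle| \ge \phi \right],
\end{align*}
so each $f_i \in \{0,1\}$. Conditioning on $\mathbf{Z}_1, \ldots, \mathbf{Z}_{i-1}$, a union bound over the $i-1$ earlier vectors, together with Lemma~\ref{lem:expangle} applied to $\pm \mathbf{Z}_j$ (using the symmetry of the uniform distribution on $\mathbb{S}^{d-1}$), yields
\begin{align*}
\mathbb{E}[f_i \mid \mathbf{Z}_1, \ldots, \mathbf{Z}_{i-1}] \le (i-1) \cdot 2(1 - \phi^2)^{(d-1)/2} =: a_i,
\end{align*}
almost surely. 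Summing gives $\sum_{i=1}^N a_i = N(N-1)(1-\phi^2)^{(d-1)/2}$, which is exactly the quantity appearing in the hypothesis \eqref{eq:genpacksize}.

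Applying Lemma~\ref{lem:conc2} with this choice of $f_i$ and $a_i$ gives
\begin{align*}
\Pr\!\left( \tfrac{1}{N} \sum_{i=1}^N f_i > t \right) \le \exp\!\left( -Nt (\log 2) + N(N-1)(1-\phi^2)^{(d-1)/2} \right) < 1,
\end{align*}
by the assumed condition. Therefore there exists a realization of the $\mathbf{Z}_i$'s for which $\sum_i f_i \le \lfloor tN \rfloor$, i.e.\ at most $\lfloor tN \rfloor$ vectors are bad. Retain the remaining vectors, obtaining a set $\mathcal{C}$ of cardinality at least $N - \lfloor tN \rfloor \ge \lfloor (1-t)N \rfloor$. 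For any two distinct retained vectors with indices $i < j$, the retention of $\mathbf{Z}_j$ means $f_j = 0$, so in particular $|\langle \mathbf{Z}_i, \mathbf{Z}_j \rangle| < \phi$, establishing \eqref{eq:genpackip}.

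The only nontrivial step is the correct bookkeeping of the $a_i$'s so that the bound from Lemma~\ref{lem:conc2} matches the hypothesis in \eqref{eq:genpacksize}; the factor of $2$ arising from the two-sided event $|\langle \cdot, \cdot \rangle| \ge \phi$ is what converts $\sum_{i=1}^N (i-1) = N(N-1)/2$ into $N(N-1)$ in the exponent. Everything else follows by routine application of the two lemmas and an integer-valued pigeonhole.
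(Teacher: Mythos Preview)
Your proposal is correct and follows essentially the same route as the paper's proof: draw $N$ uniform points on the sphere, define the indicator $f_i$ for ``bad'' vectors, bound the conditional expectation via Lemma~\ref{lem:expangle} (with the factor of $2$ for the two-sided event), apply Lemma~\ref{lem:conc2}, and extract the good subset. If anything, your write-up is slightly more explicit than the paper's in verifying that the retained vectors are pairwise good (via $f_j=0$ for the larger index) and in the integer bookkeeping $N-\lfloor tN\rfloor\ge\lfloor(1-t)N\rfloor$.
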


\begin{proof}
The goal is to generate a set of $N$ unit vectors on the surface of the sphere $\Sp^{d-1}$ such that they have large pairwise distances or, equivalently, small pairwise inner products.  To that end, define i.i.d. random variables $\mbf{Z}_1, \mbf{Z}_2, \ldots, \mbf{Z}_N$ uniformly distributed on $\Sp^{d-1}$ and functions
	\begin{align*}
	f_i(\mbf{Z}_1, \ldots, \mbf{Z}_i) = \mbf{1}\left( \left| \ip{\mbf{Z}_i}{\mbf{Z}_j} \right| > \phi,\
	 j < i \right).
	\end{align*}
That is, $f_i = 1$ if $\mbf{Z}_i$ has large inner product with any $\mbf{Z}_j$ for $j < i$.  The
conditional expectation, by a union bound and Lemma \ref{lem:expangle}, is
	\begin{align*}
	\expe{ f_i(\mbf{Z}_1, \ldots, \mbf{Z}_i) | \mbf{Z}_1, \ldots, \mbf{Z}_{i-1} }
	\le 2 (i-1) (1 - \phi^2)^{(d-1)/2}.
	\end{align*}
Let $a_i = 2 (i-1) (1 - \phi^2)^{(d-1)/2}$.  Then
	\begin{align*}
	\sum_{i=1}^{N} a_i = N(N-1) (1 - \phi^2)^{(d-1)/2}.
	\end{align*}
Then Lemma \ref{lem:conc2} shows
	\begin{align*}
	\prob{ \frac{1}{N} \sum_{i=1}^{N} f_i(\mbf{Z}_1, \ldots, \mbf{Z}_i) > t }
	\le
	\exp\left(-N t (\log 2) + N (N-1) (1 - \phi^2)^{(d-1)/2} \right).
	\end{align*}
This inequality implies that as long as 
	\begin{align*}
	-N t (\log 2) + N (N-1) (1 - \phi^2)^{(d-1)/2} < 0,
	\end{align*}
then there is a finite probability that $\{\mbf{Z}_i\}$ contains a subset $\{\mbf{Z}'_i\}$ of size $\lfloor (1 - t) N \rfloor$ such that $\left| \ip{ \mbf{Z}'_i }{ \mbf{Z}'_j } \right| < \phi$ for all $(i,j)$.  Therefore such a set exists.
\end{proof}

A simple setting of the parameters gives the packing in Lemma \ref{lem:simplepack}.

\begin{proof}
Applying Lemma \ref{lem:pack} yields a set of $K$ vectors $\mc{C}$ satisfying \eqref{eq:genpacksize} and \eqref{eq:genpackip}.  To get a simple bound that's easy to work with, we can set 
	\begin{align*}
	- N t (\log 2) + N (N-1) (1 - \phi^2)^{(d-1)/2} = 0,
	\end{align*}
and find an $N$ close to this.  Setting $\psi = (1 - \phi^2)^{(d-1)/2}$, and solving for $N$ we see
	\begin{align*}
	N = 1 + \frac{t \log 2}{\psi} > \frac{t}{2 \psi}.
	\end{align*}
Now setting $K = \frac{t (1 - t)}{2 \psi}$ and $t = 1/2$ gives \eqref{eq:simplepack}.  So there exists a set of $K$ vectors on $\Sp^{d-1}$ whose pairwise inner products are smaller than $\phi$.
\end{proof}

The maximum set of points that can be selected on a sphere of dimension $d$ such that their pairwise inner products are bounded by $\phi$ is an open question.  These sets are sometimes referred to as spherical codes~\citep{splag} because they correspond to a set of signaling points of dimension $d$ that can be perfectly decoded over a channel with bounded noise.  The bounds here are from a probabilistic construction and can be tightened for smaller $d$.  However, in terms of scaling with $d$ this construction is essentially optimal~\citep{Shannon:59gaussian}.

\bibliographystyle{plainnat}
\bibliography{../JMLRfinal/ppcaCamera/privacySub}

\end{document}